\documentclass{article}

\PassOptionsToPackage{numbers, compress}{natbib}
\usepackage[preprint]{neurips_2026}

\usepackage[utf8]{inputenc}
\usepackage[T1]{fontenc}
\usepackage{xcolor}
\usepackage{hyperref}
\hypersetup{
hypertexnames=false,
  colorlinks=true,
  linkcolor=blue,
  citecolor=teal,
  urlcolor=blue
}
\usepackage{url}
\usepackage{booktabs}
\usepackage{microtype}
\usepackage{graphicx}

\usepackage{amsmath,amssymb,amsfonts,mathtools}
\usepackage{amsthm}

\theoremstyle{plain}
\newtheorem{proposition}{Proposition}

\newtheorem{Lemma}{Lemma}

\theoremstyle{definition}
\newtheorem{definition}{Definition}

\theoremstyle{remark}

\usepackage{bm}
\usepackage{multirow}
\usepackage{array}
\usepackage{siunitx}
\usepackage{wrapfig}
\usepackage{algorithm}
\usepackage{algorithmic}

\newcommand{\R}{\mathbb{R}}
\newcommand{\E}{\mathbb{E}}
\newcommand{\KL}{\mathrm{KL}}

\newcommand{\tr}{\mathrm{tr}}
\newcommand{\diag}{\mathrm{diag}}

\title{The Dynamic-Probabilistic Consistency Gap in Chaotic Surrogate Modeling}
\author{%
  \textbf{Andre Herz}$^{1,2,*}$ \quad 
  \textbf{Matthijs Pals}$^{3}$ \quad 
  \textbf{Daniel Durstewitz}$^{1,3,4}$ \quad 
  \textbf{Georgia Koppe}$^{1,2,5,6}$ \\
  \vspace{0.8em} \\
  \small $^{1}$Interdisciplinary Center for Scientific Computing, Heidelberg University, Germany\\
  \small $^{2}$Faculty of Mathematics and Computer Science, Heidelberg University, Germany\\
  \small $^{3}$Dept. of Theoretical Neuroscience, Central Institute of Mental Health (CIMH), Mannheim, Germany\\
  \small $^{4}$Faculty of Physics and Astronomy, Heidelberg University, Germany\\
  \small $^{5}$Hector Institute for AI in Psychiatry and Dept. of Psychiatry and Psychotherapy, CIMH, Mannheim, Germany\\
  \small $^{6}$Hertie Institute for AI in Brain Health, University of Tübingen, Germany\\
  \small $^{*}$Correspondence to: \texttt{andre.herz\,@\,iwr.uni-heidelberg.de}
}

\hypersetup{
    pdftitle={The Dynamic-Probabilistic Consistency Gap in Chaotic Surrogate Modeling},
    pdfauthor={Andre Herz, Matthijs Pals, Daniel Durstewitz, Georgia Koppe}
}

\begin{document}
\maketitle
\begin{abstract}
Dynamical systems reconstruction (DSR) aims to learn surrogate models that capture the dynamics underlying time-series data. Reliably deploying these surrogates requires uncertainty estimates consistent with the learned dynamics. We expose a \emph{dynamic-probabilistic consistency} (DPC) gap: the pursuit of finite-horizon probabilistic objectives can degrade dynamics or decouple predictive uncertainty from the local tangent dynamics it ought to reflect. We isolate three mechanisms behind this gap: core collapse, noise masking, and blind uncertainty. Specifically, we show that open-loop Gaussian rollout objectives can penalize Jacobian-generated covariance growth in chaotic systems, encouraging optimization shortcuts that weaken physical expansion or decouple uncertainty from it. To mitigate this gap, we propose KAFFEE (Kalman-Aware Framework For Ergodic Emulation), a differentiable extended Kalman filter-based training framework that evaluates likelihood on local predictive residuals (innovations) while transporting covariance through learned local Jacobians. On stochastic hyperchaotic Lorenz-96, KAFFEE reduces the identified failure modes, improves reconstruction of dynamical invariants relative to open-loop objectives, and maintains competitive predictive scores. We further show that the DPC gap appears when probabilistically adapting a DSR foundation model across 13 chaotic systems, where KAFFEE enables in-context Bayesian filtering while largely preserving zero-shot dynamics.
\end{abstract}
% !TEX root = main.tex

\section{Introduction}
\label{sec:intro}

\begin{figure}[!htpb]
 \centering
  \includegraphics[width=\linewidth]{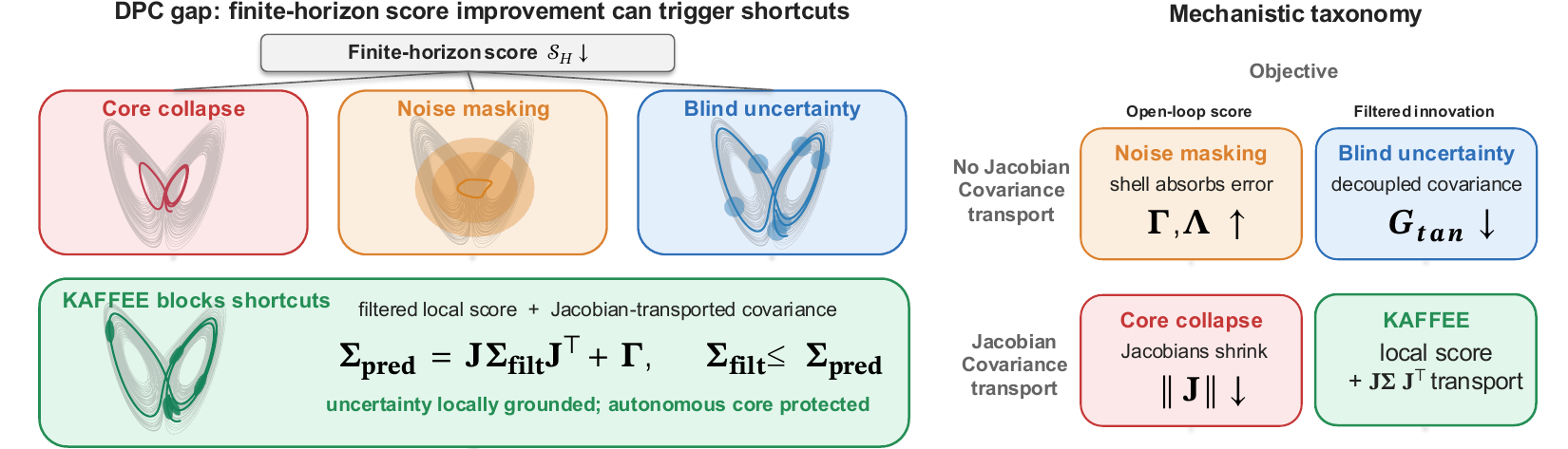}
 \caption{\textbf{The DPC gap in probabilistic DSR.} \textbf{Left}:
Finite-horizon scores $\mathcal{S}_H$ can improve through optimization shortcuts that can degrade dynamical reconstruction quality: core collapse, noise masking, or blind uncertainty. KAFFEE mitigates these by pairing filtered innovation scoring with Jacobian-based covariance propagation. \textbf{Right}: A mechanistic taxonomy of these failure modes.}
  \label{fig:taxonomy}
\end{figure}

Dynamical systems reconstruction (DSR) aims to learn generative neural surrogates of latent systems from observed time series \citep{durstewitz2023reconstructing,hess2023generalized,durstewitz2026position}. Short-horizon predictive accuracy is not sufficient here: rather, the surrogate is asked to act as an autonomous scientific model. It should therefore generate open-loop rollouts that reproduce the long-run dynamical structure of the true system, including attractor geometry, invariant measures, and chaotic expansion rates as quantified by its Lyapunov exponents \citep{goering2024outofdomain,botvinick-greenhouse2025invariant}. 
These properties are central whenever the surrogate is used beyond point forecasting, for example in invariant-measure estimation, bifurcation analysis, perturbation studies, or long-horizon model-based control \citep{jiang2023training,luo2024reconstructing,park2024when,schiff2024dyslim}.

Deployment for scientific applications, however, also requires probabilistic state estimation and \emph{uncertainty-aware} prediction from noisy observations \citep{gottwald2021supervised,psaros2023uncertainty}. In settings such as data assimilation, extreme-event forecasting, or risk-aware control, the model must not only forecast accurately on average, but also indicate when local dynamics amplify small state errors \citep{reich2015probabilistic,trevisan2011kalman}. This motivates \emph{probabilistic DSR}: a surrogate should couple autonomous physical fidelity with \emph{dynamically grounded local uncertainty}, tied to the same Jacobian structure that governs expansion and contraction.

Achieving this goal creates a structural
optimization tension. Probabilistic objectives balance data fit against predictive uncertainty \citep{mackay1992bayesian,rasmussen2006gaussian}, but in dynamical systems, uncertainty is not merely an independent degree of freedom: under autonomous propagation it is partly generated by the
same local Jacobians that determine unstable directions and long-run invariants
\citep{eckmann1985ergodic}. This can induce rapid covariance growth along unstable directions \citep{trevisan2011kalman, carrassi2022data}, creating several possible optimization shortcuts: Gaussian open-loop rollout objectives may contract learned Jacobian products to suppress covariance growth (\emph{core collapse}); flexible noise parametrizations may absorb model error without correcting the learned autonomous dynamics (\emph{noise masking}); and covariance parametrizations decoupled from local Jacobians may remain calibrated on average while being blind to locally unstable episodes (\emph{blind uncertainty}).

We identify and formalize these failure modes as instances of the \emph{dynamic-probabilistic consistency} (DPC) gap: a mismatch between improving finite-horizon probabilistic scores and preserving the dynamical-probabilistic properties required of a scientific surrogate.
The gap appears when the training objective can improve likelihood or calibration by changing the model in ways that damage autonomous dynamics or the dynamical consistency of its local uncertainty. We demonstrate that this issue appears in reconstructed probabilistic dynamical systems and affects emerging dynamical-system foundation models (DSFMs) \citep{hemmer2025true,lai2026panda,liu2025chaosnexus}, whose zero-shot dynamics must be probabilistically retrofitted to enable reliable use in noisy observation regimes.

Our analysis suggests two design principles for probabilistic DSR. First, likelihood should be evaluated locally, after data assimilation, rather than on long open-loop predictive covariances whose volume growth directly penalizes chaotic expansion. Second, local predictive uncertainty should remain dynamically grounded, so that covariance reflects the local Jacobian structure that governs the autonomous dynamics. We introduce KAFFEE (Kalman-Aware Framework For Ergodic Emulation) as one concrete realization of these principles. KAFFEE uses a differentiable Extended Kalman Filter (EKF) and optimizes the negative log-likelihood (NLL) of the EKF innovation sequence \citep{gupta1974computational, gorad2020parameter, sarkka2023bayesian, kokkala2016sigmapoint, haarnoja2016backprop,tanoh2026identifying}.

\textbf{Contributions.}
First, we \emph{identify and formalize the DPC gap}, and identify three failure modes.
We isolate open-loop Gaussian covariance-volume pressure as a concrete mechanism for one failure mode, \emph{core collapse}, in finite-horizon rollout objectives, where likelihood can penalize the Jacobian products that determine chaotic expansion.
Second, we introduce a \emph{three-axis evaluation framework} for probabilistic DSR that separates local probabilistic behavior, stochastic dynamical fidelity, and deterministic dynamical fidelity; on stochastic hyperchaotic Lorenz-96 \citep{lorenz1996predictability}, this isolates \emph{core collapse}, \emph{noise masking}, and \emph{blind uncertainty}.
Third, we propose \emph{KAFFEE}, a differentiable EKF-based framework that mitigates the DPC gap.
Fourth, we show that the DPC gap arises when probabilistically adapting the pretrained DSFM DynaMix \citep{hemmer2025true} across 13 held-out chaotic systems, and that KAFFEE enables in-context Bayesian filtering while largely preserving transferred autonomous dynamics.

\section{Preliminaries}
\label{sec:prelim}

\subsection{Probabilistic DSR: the core and the shell}
\label{subsec:prelim_dsr}

We study the reconstruction of a chaotic dynamical system from time series using generative neural surrogates. Let $\bm z_{t+1}=F_\star(\bm z_t)$ denote the unknown, discrete-time autonomous system with latent state $\bm z_t\in\mathcal Z\subset\mathbb R^M$, and let observations satisfy $\bm x_t\in\mathbb R^N$. We approximate the unknown dynamics $F_\star$ by a neural surrogate $F_{\theta_c}$ and embed it in a probabilistic 
SSM. Specifically, we consider a Gaussian SSM with nonlinear transitions and linear observations: 
\begin{align*}
p_{\theta}(\bm z_{t+1}\mid \bm z_t) = \mathcal N\big(\bm z_{t+1}; F_{\theta_c}(\bm z_t),\bm\Gamma_{\theta_s}\big),
\quad p_{\theta_s}(\bm x_t\mid \bm z_t) = \mathcal N\big(\bm x_t;\bm B\bm z_t,\bm\Lambda_{\theta_s}\big),
\end{align*}
with $\bm\Gamma_{\theta_s} \in \mathbb{R}^{M \times M}$ and $\bm\Lambda_{\theta_s} \in \mathbb{R}^{N \times N}$ denoting the process and observation covariance, and $\bm B \in \mathbb{R}^{N \times M}$ denoting the observation mapping. The model parameters are partitioned into $\theta=(\theta_c,\theta_s)$. The \emph{deterministic core} $\theta_c$ controls the autonomous transition mapping. It determines the local Jacobians $\bm J_{\theta_c}(\bm z) := \nabla_{\bm z} F_{\theta_c}(\bm z)$, whose long products govern local stretching and contraction, yielding the system's dynamical invariants and shaping the attractor geometry \citep{eckmann1985ergodic}. The \emph{stochastic shell} $\theta_s$ parametrizes the uncertainty around this deterministic core, here through $\bm\Gamma_{\theta_s}$ and $\bm\Lambda_{\theta_s}$. The observation matrix $\bm B$ is fixed in our experiments.

\subsection{Marginal likelihood, filtering, and covariance transport}
\label{subsec:evidence_filtering}
Under the standard conditional independence and Markov assumptions of an SSM, the observed-data marginal likelihood (evidence) of a sequence $\bm{x}_{1:T}$ is given by 
\begin{equation}
\label{eq:observed_evidence}
p_\theta(\bm x_{1:T}) = \int\left(\prod_{t=1}^T p_{\theta_s}(\bm x_t\mid \bm z_t)\right) p_\theta(\bm z_{1:T}) d\bm z_{1:T},
\end{equation}
where $p_\theta(\bm z_{1:T}) = p_\theta(\bm z_1)\prod_{t=1}^{T-1}p_\theta(\bm z_{t+1}\mid \bm z_t)$ is the surrogate's \emph{open-loop latent trajectory prior} distribution, generated by rolling out the autonomous core and stochastic shell without conditioning on intermediate observations. In the following, we will see how this prior interacts with the likelihood.

Evaluating Eq.~\ref{eq:observed_evidence} requires marginalizing the latent trajectory, typically via recursive filtering. This involves the predictive densities $p_\theta(\bm z_t \mid \bm x_{1:t-1}) = \int p_\theta(\bm z_t \mid \bm z_{t-1}) p_\theta(\bm z_{t-1} \mid \bm x_{1:t-1}) d\bm z_{t-1}$, which are intractable for nonlinear $F_{\theta_c}$. A common approach is approximating the predictive distributions as Gaussian, $p(\bm z_t \mid \bm x_{1:t-1}) \approx \mathcal{N}(\bm z_{t|t-1}, \bm\Sigma_{t|t-1})$, as in the EKF, where the predict step analytically propagates the filtered covariance through the local Jacobians \citep{sarkka2023bayesian}:
\begin{equation}
\label{eq:ekf_predict_cov}
\bm\Sigma_{t|t-1} = \bm J_{\theta_c,t}\bm\Sigma_{t-1|t-1}\bm J_{\theta_c,t}^\top + \bm\Gamma_{\theta_s},\qquad \bm J_{\theta_c,t} = \nabla_{\bm z}F_{\theta_c}(\bm z_{t-1|t-1}).
\end{equation}
The likelihood is then given recursively via $p_\theta(\bm x_{1:T}) = p_\theta(\bm x_1) \prod_{t=2}^T p_\theta(\bm x_t \mid \bm x_{1:t-1})$. This evaluates the \emph{innovations}:
$\bm x_t -\bm{Bz}_{t|t-1}$, which have covariance given by: 
\begin{align}
\label{eq:innovation}
\bm\Omega_t = \bm B\bm\Sigma_{t|t-1}\bm B^\top + \bm\Lambda_{\theta_s}.
\end{align} 
Thus, the same local Jacobians that define the autonomous dynamics also determine
how predictive uncertainty is transported. This coupling implies that uncertainty cannot be treated as an independent calibration layer in probabilistic DSR. 

\section{Theory and Methods}
\label{sec:theory_methods}

\subsection{Defining the DPC gap}
\label{subsec:dpc_gap}
Let $\mathcal S_H(\theta)$ be a finite-horizon probabilistic training objective (e.g., NLL) and let $\mathcal D(\theta)$ be a certification loss assessing the quality of probabilistic DSR; both defined such that lower values are better.
\begin{definition}[Dynamic-Probabilistic Consistency gap]
A parameter refinement step $\theta \to \theta'$ exhibits a DPC gap with respect to $(\mathcal S_H,\mathcal D)$ if
\[
\mathcal S_H(\theta') < \mathcal S_H(\theta)
\quad\text{but}\quad
\mathcal D(\theta') > \mathcal D(\theta).
\]
\end{definition}

That is, the optimized probabilistic criterion can favor an update that improves finite-horizon predictive fit while worsening the certification loss, which encodes properties required for probabilistic DSR. In our experiments, we instantiate $\mathcal D$ along three axes: Local probabilistic behavior, shell-on stochastic dynamical fidelity, and shell-off deterministic dynamical fidelity (see Section~\ref{subsec:eval-framework}).

\subsection{How the DPC gap manifests}
\label{subsec:dpc_mechanisms}
How the DPC gap manifests depends on two design choices: on how the training objective scores uncertainty, and on how the model parameterizes that uncertainty. We focus on three failure modes that we discuss here and isolate empirically in Section~\ref{sec:results}.

\textbf{Core collapse from open-loop Gaussian volume pressure.}
Learned surrogates are ultimately used as autonomous simulators. When surrogates are fit using only one-step ahead (teacher-forced) predictions given noisy data, these models can suffer from exposure bias, as autonomous generation uses purely model-generated states \citep{doya1992bifurcations,levine2022framework,bengio2015scheduled,ranzato2016sequence}.

Finite-window rollout losses are therefore common in neural system identification and DSR \citep{hess2023generalized, forgione2021continuoustime,beintema2021nonlinear,beintema2023deep}. In probabilistic analogues of this setting, scoring multi-step predictive densities makes uncertainty propagation part of the objective \citep{girard2002multiplestep,doerr2018probabilistic}.

Suppose an $h$-step predictive density is approximated by a Gaussian, $p_\theta(\bm x_{t+h}\mid \bm x_{1:t})=\mathcal N(\bm x_{t+h};\bm\mu_{\theta,t+h},\bm\Omega_{\theta,t+h})$. Its NLL then reads
\[-\log p_\theta(\bm x_{t+h}\mid \bm x_{1:t}) = \frac12\log\det\bm\Omega_{\theta,t+h}+\frac12\|\bm x_{t+h}-\bm\mu_{\theta,t+h}\|^2_{\bm\Omega_{\theta,t+h}^{-1}} + \mathrm{const}.
\]
The Mahalanobis term rewards data fit, while the log-determinant term penalizes predictive volume. The latter term in the Bayesian marginal likelihood is referred to as an ``automatic Occam’s razor'' \citep{rasmussen2006gaussian,mackay1992bayesian}, favouring models that concentrate probability mass while penalizing unnecessarily large predictive volumes. In standard probabilistic forecasting this prevents the model from explaining 
errors by becoming arbitrarily diffuse. 

In chaotic DSR, however, physically correct local expansion also increases predictive volume. This mechanism becomes apparent 
by considering the $h$-step Jacobian product  $\bm\Phi_{\theta,t+h:t} := \bm J_{\theta_c,t+h}\cdots \bm J_{\theta_c,t+1}$. Crucially, this same object governs chaotic expansion: its asymptotic singular values define the full Lyapunov spectrum $\lambda_1 \geq \dots \geq \lambda_M$, with asymptotic stability defined by the leading exponent $\lambda_1 = \lim_{h\to\infty} \tfrac1h \log \Vert \bm\Phi_{\theta,t+h:t} \Vert_2$ \citep{eckmann1985ergodic}.

Under local linearization, the open-loop covariance $\bm\Sigma^{\mathrm{open}}_{t+h} \in \mathbb{R}^{M \times M}$ of the surrogate's latent predictive density $p(\bm{z}_{t+h}\mid \bm{x}_{1:t})$ is given by repeatedly applying Eq.~\ref{eq:ekf_predict_cov} without intermediate assimilation:
\[
\bm\Sigma^{\mathrm{open}}_{t+h} \approx \bm\Phi_{\theta,t+h:t}\bm\Sigma_t\bm\Phi_{\theta,t+h:t}^{\top} + \sum_{j=0}^{h-1}\bm\Phi_{\theta,t+h:t+j+1}\bm\Gamma_{\theta_s}\bm\Phi_{\theta,t+h:t+j+1}^{\top}.
\]
The corresponding observation covariance is $\bm\Omega^{\mathrm{open}}_{\theta,t+h} = \bm B\bm\Sigma^{\mathrm{open}}_{t+h}\bm B^\top + \bm\Lambda_{\theta_s}$ (cf. Eq.~\ref{eq:innovation}). Thus, the log-determinant term directly penalizes volumes generated by products of the learned Jacobians. Chaotic systems require expansion along unstable directions, causing these volumes to grow rapidly under autonomous propagation \citep{trevisan2011kalman,carrassi2022data}.
The optimizer can alleviate this volume pressure by shrinking these products, thereby compromising the autonomous dynamics, as we will demonstrate in our empirical experiments. Appx.~\ref{app:volume_pressure_proof} formalizes this covariance-volume pressure by showing that the open-loop log-determinant term increases monotonically along local Jacobian-expansion paths.

Note that this argument does not imply that likelihood is intrinsically biased. Under a correctly specified model and with the exact observed-data likelihood, maximum likelihood targets the true data-generating process under standard
identifiability assumptions. DPC-type mechanisms arise from practical circumstances in probabilistic DSR: finite-window/open-loop rollout objectives, Gaussian approximations to highly non-Gaussian chaotic laws, and approximate latent marginalization.

\textbf{Noise masking through flexible stochastic shells.}
The opposite shortcut is possible when predictive uncertainty is parametrized
too flexibly or independently of Jacobian transport. In this case, the optimizer may inflate $\bm\Gamma_{\theta_s}$, $\bm\Lambda_{\theta_s}$ (or a learned neural variance parametrization) to absorb transition errors without improving the deterministic core \citep{levine2022framework}. This can improve predictive scores despite dynamically incorrect autonomous rollouts. Related effects can also arise in variational state-space objectives when overly flexible components reduce the penalty on transition-model mismatch \cite{zhao2023revisiting}.

\textbf{Blind uncertainty from missing covariance transport.}
A third failure mode concerns the local uncertainty itself. A model can preserve reasonable autonomous dynamics and obtain competitive average predictive scores,
while its predictive covariance remains insensitive to local expansion and
contraction. This happens when the covariance propagation (Eq. \ref{eq:ekf_predict_cov}) is state-invariant or
parameterized separately from the learned Jacobians. Such a model may be
calibrated on average, but its local uncertainty is not dynamically grounded. We treat this as a DPC-gap instance where $\mathcal D$ assesses whether predictive uncertainty evolves in accordance with the true local Jacobian dynamics (measured in our experiments by $G_{\mathrm{tan}}$, see Section~\ref{subsec:eval-framework}). Again, a related problem can arise in variational inference, with certain factorized variational families. If uncertainty is not propagated in the chosen factorization (such as in mean-field approximations, previously used for DSR \citep{brenner2022tractable}), learned dynamics can be biased \citep{turner2011two,bayer2021mind}.

\textbf{Relation to noisy-state attenuation.} The open-loop volume mechanism is related to, but distinct from, classical noisy-state attenuation. In noisy-state attenuation, noisy observations are treated directly as true states, turning one-step Gaussian prediction into an errors-in-variables regression and biasing the learned transition coefficient toward zero \citep{fuller1987measurement,staudenmayer2005measurement}. This explains some failures of one-step training on noisy data, as illustrated by the scalar diagnostic in Appx.~\ref{app:attenuation_diagnostic}
and the non-chaotic controls in Appx.~\ref{app:nonchaotic_controls}. 

\subsection{KAFFEE: Dynamically grounded local uncertainty}
\label{subsec:kaffee}
The preceding mechanisms suggest two desiderata for probabilistic DSR: likelihood should be evaluated through filtered innovations, and the covariance entering this local objective should remain coupled to the learned transition map. KAFFEE implements both using a differentiable EKF \citep{gupta1974computational, gorad2020parameter, sarkka2023bayesian, kokkala2016sigmapoint, haarnoja2016backprop,tanoh2026identifying}.

KAFFEE maintains a filtered Gaussian belief $p(\bm z_t\mid \bm x_{1:t}) = \mathcal N(\bm z_{t|t},\bm\Sigma_{t|t}).$ At each step, the deterministic core predicts the filtered mean $\bm z_{t|t-1} = F_{\theta_c}(\bm z_{t-1|t-1})$, and the covariance is propagated by the same local Jacobians that define the autonomous core (Eq.~\ref{eq:ekf_predict_cov}).
After prediction, observations $\bm x_t$ are assimilated through the Kalman gain $\bm K_t = \bm\Sigma_{t|t-1} \bm B^\top \bm\Omega_t^{-1}$, with the innovation covariance $\bm\Omega_t$ given by Eq. \ref{eq:innovation}. The covariance update can be written as
$\bm\Sigma_{t|t} = \bm\Sigma_{t|t-1} - \bm{K}_t\bm B\bm\Sigma_{t|t-1}$. 
Thus, assimilation contracts uncertainty before the next prediction step: We have $\bm\Sigma_{t|t}\preceq\bm\Sigma_{t|t-1}$; see Appx.~\ref{app:volume_pressure_proof}. This does not make the deterministic core contractive; but rather localizes the uncertainty scored by the likelihood before the next prediction. Appx.~\ref{app:ekf_equations} lists the complete EKF equations.

KAFFEE minimizes the resulting innovation NLL with stochastic gradient descent:
\begin{equation}
\label{eq:filtered_likelihood}
\mathcal L_{\mathrm{EKF}} = -\sum_{t=1}^T \log p_\theta(\bm x_t\mid \bm x_{1:t-1})=\sum_{t=1}^T \frac12\log\det\bm\Omega_t + \frac12\Vert\bm x_t-\bm B\bm z_{t|t-1}\Vert^2_{\bm\Omega_t^{-1}} + \mathrm{const.}
\end{equation}
In the linear-Gaussian case, the Kalman filter gives the exact NLL; in nonlinear chaotic surrogates, the EKF provides a structured inductive bias for dynamically grounded local uncertainty. Appx.~\ref{app:anti_contraction} shows that the innovation covariance term does not simply reward contraction when the model is underdispersed.

While the underlying equations are those of a differentiable EKF, KAFFEE defines a dedicated training framework designed to translate deterministic DSR models into probabilistic surrogates with calibrated local uncertainty while reducing DPC-gap shortcuts. KAFFEE initializes the autonomous core $F_{\theta_c}$ from a pretrained deterministic surrogate (or zero-shot foundation model) and optimizes the full probabilistic model via windowed backpropagation through time (BPTT). This lets the model first capture the deterministic dynamical invariants, and then leverage the EKF's covariance transport as a structural inductive bias for grounding local uncertainty. Furthermore, differentiating through the Kalman update makes KAFFEE a Bayesian-filtering analogue of forcing-based stabilization methods for chaotic recurrent training \citep{hess2023generalized,mikhaeil2022difficulty,sagtekin2025error,herz2026teacher}; see Section~\ref{sec:related_work}.

\section{Experimental Setup}
\label{sec:exp_setup}

We examine the mechanism behind the DPC gap and evaluate KAFFEE in two complementary settings: an interpretable mechanistic ablation using piecewise-affine RNNs \citep{koppe2019identifying,brenner2024almostlinear}, and a macroscopic scale-up using the pretrained DSFM DynaMix on a range of chaotic systems \citep{hemmer2025true}.

\subsection{Deterministic priors and baselines}
\label{subsec:benchmarks}
 
All probabilistic baselines are initialized from the same deterministic core: an almost-linear RNN (AL-RNN) \citep{brenner2024almostlinear} trained via BPTT with sparse teacher forcing (STF). AL-RNNs partition the state space into polyhedral regions with explicit local linear dynamics ($\bm z_{t+1} = \bm A\bm z_t + \bm W\,\bm\phi^\ast(\bm z_t) + \bm h$), where $\bm\phi^\ast$ denotes ReLU activation, applied only to a subset of latent coordinates). This provides a powerful tangent space approximation of the underlying system with exact, closed-form regionwise Jacobians that KAFFEE exploits directly for covariance transport. To optimize the AL-RNN parameters end-to-end, all baselines employ a straight-through estimator (STE) to propagate gradients through the discrete gate transitions, see Appx.~\ref{app:kaffee_opt_details}. Details on AL-RNN pretraining are in Appx.~\ref{app:alrnn_implementation_details}.

To isolate the mechanisms behind the DPC gap, all probabilistic models share the same deterministic core and adaptation budget. As illustrated in Fig.~\ref{fig:taxonomy}, we structure our baselines as a mechanistic taxonomy along two axes: \textbf{Objective} (Open-Loop vs.\ Filtered Innovation) and \textbf{Parameterization} (Jacobian covariance transport). This explicitly isolates the failure modes: open-loop objectives actively exploit the DPC gap (\emph{core collapse} and \emph{noise masking}), while no-transport parameterizations fail to dynamically ground uncertainty (\emph{blind uncertainty}). For the ``tied-transport'' baselines, the predictive covariance is propagated through the learned local Jacobian (Eq.~\ref{eq:ekf_predict_cov}), whereas in no-transport rows this is replaced by a state-invariant recursion, $\bm\Sigma_{t\mid t-1}=\bm\Sigma_{t-1\mid t-1}+\bm\Gamma$. The open loop objectives score Gaussian rollouts while filtered objectives optimize innovation NLL (Eq. \ref{eq:filtered_likelihood}). We also include a \emph{Frozen-Core} baseline to isolate the effect of post-hoc calibration. It uses the filtered covariance but keeps $F_{\theta_c}$ fixed.

To demonstrate the pathology at scale, we probabilistically adapt DynaMix, a pretrained DSFM for zero-shot inference \citep{hemmer2025true}. Here, the modular nature of KAFFEE allows us to embed the pretrained DSFM into an SSM framework. In particular, we attach a Gaussian shell to DynaMix  and evaluate whether probabilistic fine-tuning destroys its zero-shot transferred physics. Additional model and implementation details 
are provided in Appx.~\ref{app:dynamix_practical_details}.

\subsection{Three-Axis Evaluation Framework}
\label{subsec:eval-framework}

To empirically expose the DPC gap, we evaluate surrogates along three separate axes (full definitions in Appx.~\ref{app:metric_computation}):

\textbf{(1) Local probabilistic behavior} assesses local predictive performance at a physically meaningful horizon ($H=\tfrac14\tau_\lambda$, where $\tau_\lambda = \lambda_1^{-1}$ is the Lyapunov time) using $200$ sampled rollouts / trajectories for held-out NLL and continuous ranked probability score (CRPS). Because these metrics are time-averaged, they are dominated by easily predictable, stable regions of the attractor and can obscure severe overconfidence during highly unstable episodes. To explicitly evaluate whether uncertainty is structurally tied to the immediate local physics, we introduce $G_{\mathrm{tan}}$: a Spearman rank-correlation between the per-step log expansion factor of a surrogate's predictive variance and the corresponding expansion factor obtained by pushing the same variance through the true local Jacobians, pooled across observed coordinates.

\textbf{(2) Stochastic dynamical fidelity} evaluates long shell-on rollouts of the surrogate against the noisy ground truth via state-space divergence ($D_{\mathrm{stsp}}^{\mathrm{stoch}}$), which measures how well the state-space occupancy of generated trajectories agrees with that of the reference dynamics, and unstable-volume growth rate error ($|\Delta h_\lambda^{\mathrm{stoch}}|$), quantified by the cumulative reconstruction error in the positive exponents of the (stochastic) Lyapunov spectrum.

\textbf{(3) Deterministic dynamical fidelity} isolates the deterministic core (shell removed) and evaluates it against the noise-free true system using state-space divergence ($D_{\mathrm{stsp}}^{\mathrm{core}}$), spectral overlap using Hellinger distance ($D_\mathrm{H}^{\mathrm{core}}$), and deterministic unstable-volume growth rate error ($|\Delta h_\lambda^{\mathrm{core}}|$).

\section{Results}
\label{sec:results}

\subsection{Diagnosing the DPC Gap: Mechanistic ablation on stochastic Lorenz-96}
\label{subsec:alrnn_results}

\begin{figure}[h]
\centering
\includegraphics[width=\textwidth]{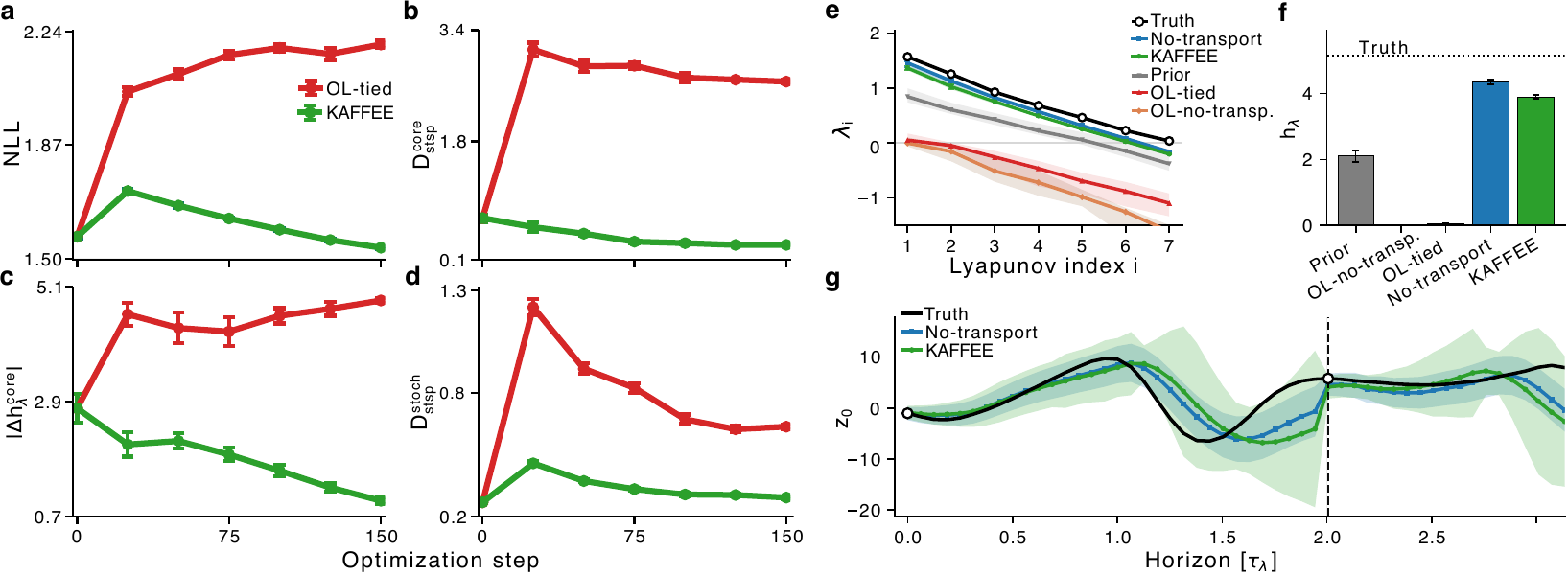}
\caption{\textbf{Stochastic Lorenz-96 (20D) 
results.} \textbf{Left: DPC gap.}
\textbf{(a)} Held-out NLL, \textbf{(b)} deterministic 
state-space divergence ($D_\mathrm{stsp}$), \textbf{(c)} unstable-volume growth error, and \textbf{(d)} stochastic $D_\mathrm{stsp}$.
\textbf{Right: Invariant reconstruction and dynamically grounded local uncertainty.} \textbf{(e)} Median reconstruction of positive deterministic Lyapunov exponents $\lambda_i$.
Only filtered innovations outperform the deterministic prior. \textbf{(f)} Corresponding KS entropy $(h_\lambda^\mathrm{core})$. \textbf{(g)} Single Lorenz-96 latent; shaded bands
= predictive $\pm 2\sigma$; assimilation after two Lyapunov times. KAFFEE grounds local uncertainty through state-dependent Jacobians; No-transport exhibits blind uncertainty. Errors $=$ SEs over $20$ seeds.
}
\label{fig:l96_results}
\end{figure}

We examine the DPC gap on the Lorenz-96 system ($N=20, F=8$) under substantial noise ($\sigma_{\mathrm{proc}}=\sigma_{\mathrm{obs}}=0.1$ in standardized coordinates) \citep{lorenz1996predictability}. Lorenz-96 is hyperchaotic, i.e., it exhibits multiple expanding directions. For details on dataset generation, see Appx.~\ref{app:lorenz_datasets}.

Fig.~\ref{fig:l96_results} shows the DPC failure modes in action: under matched covariance propagation and adaptation budgets, minimizing the open-loop NLL objective degrades the autonomous core, while filtered innovation objectives improve it (Fig.~\ref{fig:l96_results}a--d). The remaining panels e--g of Fig.~\ref{fig:l96_results} visualize the reconstruction of dynamical invariants and the local grounding of uncertainty. Specifically, panels e--f visualize (the sum of) positive Lyapunov exponents $h_{\lambda}^{\mathrm{core}}$ (i.e., the unstable-volume growth rate, also called the Kolmogorov-Sinai (KS) entropy \citep{eckmann1985ergodic}).
Both filtered objectives (KAFFEE, No-transport) successfully retain the system's intrinsic hyperchaotic expansion that the open-loop objectives (OL-tied, OL-no-transp.) collapse and the deterministic prior struggles to capture. However, only KAFFEE combines this healthy filtered core with dynamically grounded local uncertainty (Fig.~\ref{fig:l96_results}g). The no-transport baseline cannot do so, because its covariance recursion merely re-adds a fixed $\bm\Gamma$ at every step, leading to overconfidence during unstable episodes.

Table~\ref{tab:main_l96_ablation} maps our empirical results to the mechanistic taxonomy of Fig.~\ref{fig:taxonomy} and the three-axis evaluation framework (Section~\ref{subsec:eval-framework}). The failure modes manifest as predicted. \emph{Core collapse} (open-loop + tied transport) destroys long-run fidelity. \emph{Noise masking} (open-loop + no transport) severs the covariance-transport gradient and abandons the deterministic dynamics, yielding poor predictive and autonomous scores, and a $G_{\mathrm{tan}}$ that is statistically indistinguishable from zero. \emph{Blind uncertainty} (filtered + no transport) preserves a healthy core, yet its state-invariant covariance recursion cannot track time-varying local physical expansion: $G_{\mathrm{tan}}$ collapses to essentially zero, confirming that the residual state-dependence in its predictive variance comes only indirectly through the Kalman update of the posterior and not from Jacobian-transported prior covariance. KAFFEE retains the only large positive $G_{\mathrm{tan}}$ and balances dynamically grounded local uncertainty with strong predictive calibration and stochastic fidelity, as well as substantially improved deterministic dynamics over the prior. The frozen-core baseline further shows that post-hoc noise calibration can ground local covariance but inherits the prior's miscalibrated core.

Local predictive metrics confirm the same picture from a probabilistic angle. Within filtered objectives, NLL and CRPS are tightly clustered: predictive calibration alone is unable to separate the dynamically grounded model from the blind-uncertainty and post-hoc-calibration controls, even though their downstream stochastic and deterministic dynamical scores differ markedly, underscoring the necessity of the three-axis evaluation. The open-loop methods are also worse on local NLL and CRPS in this canonical run, consistent with covariance-volume shortcuts harming calibration as well as long-run dynamics. Since all probabilistic rows share the same initialization, optimizer, schedule, and adaptation budget, the ablation isolates the tested objective. 

\begin{table}[t]
\caption{\textbf{Mechanistic ablation on stochastic Lorenz-96.} All probabilistic rows share the \emph{same} end-to-end training schedule and are initialized from the same deterministic AL-RNN core. \textbf{Bold}: median whose $\pm$SE interval overlaps the column's best; \textcolor{red}{red}: signature failure mode (bracketed label). Medians $\pm$ 
SEs across $20$ seeds; see Appx.~\ref{app:metric_computation}.}
\centering
\setlength{\tabcolsep}{2.2pt}
\renewcommand{\arraystretch}{0.96}
\resizebox{\linewidth}{!}{%
\begin{tabular}{@{}l ccc cc ccc@{}}
\toprule
 & \multicolumn{3}{c}{\textbf{Local Probabilistic Behavior}} & \multicolumn{2}{c}{\textbf{Stochastic  Dynamics (Shell+Core)}} & \multicolumn{3}{c}{\textbf{Deterministic Dynamics (Core)}} \\
\cmidrule(lr){2-4} \cmidrule(lr){5-6} \cmidrule(lr){7-9}
\textbf{Model / Objective} & NLL ($\tfrac14\tau_\lambda$) $\downarrow$ & CRPS ($\tfrac14\tau_\lambda$) $\downarrow$ & $G_{\mathrm{tan}}$ $\uparrow$ & $D_{\mathrm{stsp}}^{\mathrm{stoch}}$ $\downarrow$ & $|\Delta h_\lambda^{\mathrm{stoch}}| \downarrow$ & $D_{\mathrm{stsp}}^{\mathrm{core}}$ $\downarrow$ & $|\Delta h_\lambda^{\mathrm{core}}| \downarrow$ & $D_\mathrm{H}^{\mathrm{core}}$ $\downarrow$ \\
\midrule
\multicolumn{9}{@{}l}{\textbf{\emph{Deterministic prior}}}\\
\hspace{1em}AL-RNN (no shell) & -- & -- & -- & -- & -- & $0.697{\pm}0.054$ & $2.773{\pm}0.282$ & $0.350{\pm}0.019$\\
\midrule
\hspace{1em}Frozen-Core Shell Calibration & $1.572{\pm}0.007$ & {\bfseries\boldmath $0.635{\pm}0.003$} & $0.283{\pm}0.010$ & {\bfseries\boldmath $0.268{\pm}0.005$} & $1.728{\pm}0.068$ & $0.697{\pm}0.054$ & $2.773{\pm}0.282$ & $0.350{\pm}0.019$\\
\multicolumn{9}{@{}l}{\textbf{\emph{Open-Loop objective}}}\\
\hspace{1em}No transport \textcolor{gray}{[Noise Masking]} & \textcolor{red}{$2.150{\pm}0.026$} & \textcolor{red}{$1.590{\pm}0.053$} & \textcolor{red}{$0.004{\pm}0.002$} & \textcolor{red}{$0.982{\pm}0.015$} & \textcolor{red}{$4.883{\pm}0.000$} & \textcolor{red}{$2.638{\pm}0.022$} & \textcolor{red}{$4.870{\pm}0.023$} & \textcolor{red}{$0.833{\pm}0.007$}\\
\hspace{1em}Tied transport \textcolor{gray}{[Core Collapse]} & \textcolor{red}{$2.197{\pm}0.012$} & \textcolor{red}{$1.642{\pm}0.045$} & $0.180{\pm}0.009$ & $0.632{\pm}0.013$ & \textcolor{red}{$4.883{\pm}0.000$} & \textcolor{red}{$2.650{\pm}0.024$} & \textcolor{red}{$4.843{\pm}0.042$} & \textcolor{red}{$0.840{\pm}0.004$}\\
\multicolumn{9}{@{}l}{\textbf{\emph{Filtered objective}}}\\
\hspace{1em}No transport \textcolor{gray}{[Blind Uncertainty]} & $1.601{\pm}0.008$ & $0.665{\pm}0.003$ & \textcolor{red}{$-0.008{\pm}0.004$} & $0.453{\pm}0.021$ & {\bfseries\boldmath $0.200{\pm}0.051$} & {\bfseries\boldmath $0.300{\pm}0.027$} & {\bfseries\boldmath $0.421{\pm}0.040$} & {\bfseries\boldmath $0.130{\pm}0.005$}\\
\hspace{1em}Tied transport (KAFFEE) & {\bfseries\boldmath $1.535{\pm}0.004$} & {\bfseries\boldmath $0.632{\pm}0.005$} & {\bfseries\boldmath $0.475{\pm}0.004$} & $0.294{\pm}0.005$ & $0.696{\pm}0.068$ & {\bfseries\boldmath $0.295{\pm}0.005$} & $0.999{\pm}0.069$ & {\bfseries\boldmath $0.138{\pm}0.005$}\\
\bottomrule
\end{tabular}
}
\label{tab:main_l96_ablation}
\end{table}
\subsection{In-context probabilistic adaptation of DynaMix}
We next test whether the DPC gap appears when probabilistically adapting a pretrained DSFM. We use DynaMix \citep{hemmer2025true}, which can take in observations of a dynamical system in its context window, and then simulate the system's long-term evolution. Here, we adapt DynaMix to 13 held-out dynamical systems, contaminated by $10\%$ observation noise. All target-system learning is restricted to the 2000 observations in DynaMix's context window. 

We compare three probabilistic adaptation modes: \emph{KAFFEE-DynaMix} jointly updates the DynaMix core and a diagonal process/observation covariance shell using the EKF innovation objective. The \emph{open-loop control} uses the same transferred core and shell initialization, but updates the core using a Gaussian open-loop NLL objective. Finally, the \emph{noise-only} control optimizes only the diagonal shell under the EKF objective while freezing the DynaMix core; its autonomous dynamics are therefore identical to the pretrained core by construction. Full numerical summaries and additional baselines are reported in Appx. Table~\ref{tab:dynamix_results_table}.

Fig.~\ref{fig:dynamix_context_adaptation}a shows the resulting trade-off. After 10 optimization steps on 13 systems and 20 seeds, KAFFEE matches or improves the held-out NLL of the open-loop control while drifting less from the pretrained autonomous core. Yet, the autonomous geometry differs substantially. Noise-only adaptation preserves the core by construction, but performs worse in NLL. KAFFEE thus preserves most of the zero-shot DynaMix dynamics while obtaining nearly the same probabilistic improvement as open-loop fine-tuning.

Fig.~\ref{fig:dynamix_context_adaptation}b illustrates the complementary filtering capability. After adaptation, DynaMix can be deployed as a Bayesian state estimator, where DynaMix acts as the transition model in an EKF. Uncertainty is propagated via local Jacobians, while sparse observations are assimilated online through measurement updates. This shows the consequence of the KAFFEE parameterization--a deterministic DSFM becomes a well-calibrated local filter without compromising its autonomous core.

\label{subsec:dynamix_results}

\begin{figure}[!htbp]
  \centering
  \includegraphics[width=\linewidth]{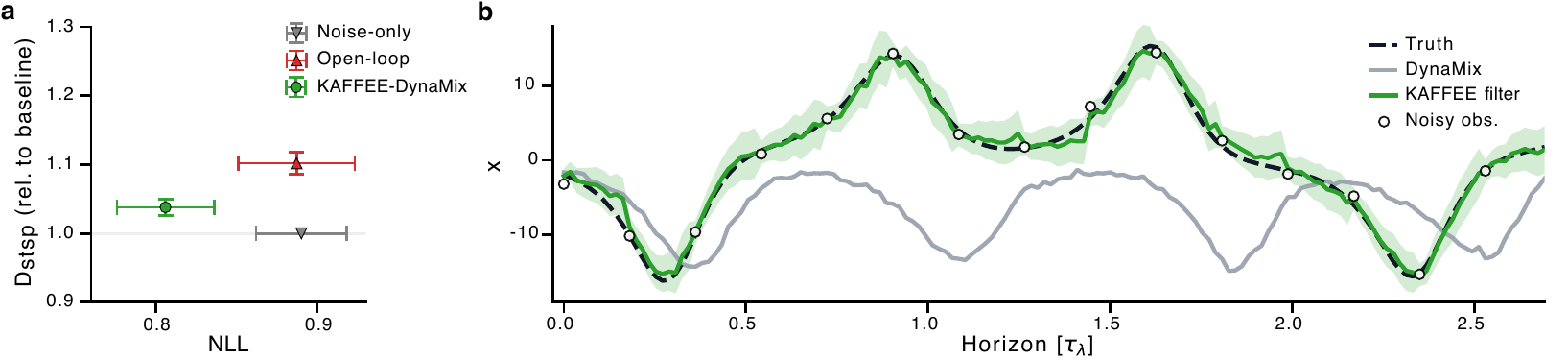}
  \caption{\textbf{DPC gap and online filtering in DynaMix.} \textbf{(a)} KAFFEE improves NLL (x-axis) while drifting less from the pretrained attractor (y-axis) than open-loop. Markers show medians $\pm$ SEs over 13 systems. \textbf{(b)} KAFFEE turns DynaMix into an online Bayesian filter that assimilates sparse noisy observations; illustrated here on one Lorenz-63 coordinate \citep{lorenz1963deterministic}.}
  \label{fig:dynamix_context_adaptation}
\end{figure}

\section{Related Work}
\label{sec:related_work}

\textbf{Probabilistic DSR and data assimilation.} 
Recovering chaotic attractors from data is a central challenge in DSR, where learned surrogates are expected to act as autonomous generative models that preserve long-run dynamical structure \citep{levine2022framework,durstewitz2023reconstructing,goering2024outofdomain,durstewitz2026position,sip2026double}. Probabilistic neural SSMs and SDEs add latent uncertainty, typically via finite-horizon variational or marginal-likelihood objectives \citep{brenner2022tractable,li2020scalable}. Recent work in ergodic theory shows that small finite-horizon test error in learned dynamical systems need not imply statistical accuracy, including recovery of moments, invariant measures, or Lyapunov structure \citep{park2024when}. We extend this insight by identifying objective-level shortcuts through which probabilistic objectives can degrade physical invariants or mask errors through inflated noise. Data assimilation (DA) is an overlapping field that combines dynamical models, noisy (partial) observations, and uncertainty information to estimate states and improve forecasts \citep{reich2015probabilistic,carrassi2018data}, with chaotic DA emphasizing how covariance structure is shaped by tangent dynamics and the unstable-neutral subspace \citep{trevisan2011kalman,carrassi2022data}. Methods combining DA and machine learning use this machinery to learn dynamics, build emulators, or correct model error from sparse/noisy observations \citep{brajard2020combining,bocquet2021online,farchi2021comparison,levine2022framework,bocquet2024accurate}, and concurrent auto-differentiable DA co-learns states, dynamics, and filtering parameters \citep{adrian2026autodifferentiable}. KAFFEE shares this differentiable-filtering/DA machinery, but targets probabilistic DSR by asking whether probabilistic optimization preserves an autonomous scientific surrogate's invariant structure and dynamically grounded local uncertainty, rather than only providing state-estimation accuracy or downstream forecasting skill. This distinction is also increasingly relevant for DSFMs \citep{hemmer2025true,lai2026panda,liu2025chaosnexus}. Here, concurrent work independently supports probabilistic adaptation of high-performing deterministic backbones rather than training probabilistic models from scratch, demonstrating CRPS-based retrofitting for learned PDE simulators, including a PDE foundation model \citep{diaconu2026probabilistic}. Our results suggest that such proper scores should be complemented by dynamical-probabilistic diagnostics that test autonomous invariants and dynamically grounded local uncertainty when the scientific target goes beyond finite-horizon forecasting.

\textbf{Differentiable EKFs.} Parameter learning via differentiable EKFs is established in state-space modeling \cite{gupta1974computational,kokkala2016sigmapoint, sarkka2023bayesian}, and has become increasingly practical in recent years via automatic differentiation frameworks \cite{gorad2020parameter, haarnoja2016backprop}. Backpropagation through the EKF’s NLL estimate has recently been used for parameter learning in neuroscience models \cite{tanoh2026identifying}. KAFFEE is not proposed as a new filtering algorithm; rather, it operationalizes a broader training principle for probabilistic DSR: evaluate a filtered, localized objective while keeping local uncertainty tied to the learned dynamics. In this work, the filtered objective is a Gaussian innovation NLL and the dynamic grounding is Jacobian-based covariance propagation, yielding a differentiable-EKF instantiation. We show that this instantiation mitigates the DPC gap and scales to in-context probabilistic adaptation of the DynaMix DSFM, equipping it with online Bayesian filtering capabilities while largely preserving its zero-shot dynamical fidelity. 

\textbf{Stabilizing BPTT for chaotic surrogates.} Gradient-based optimization of recurrent surrogates in chaotic regimes is notoriously difficult due to exploding Jacobian product chains \citep{mikhaeil2022difficulty}. Methods like Generalized Teacher Forcing \citep{hess2023generalized} and Error Forcing \citep{sagtekin2025error} stabilize training by geometrically projecting gradients or orthogonalizing errors toward the zero-error manifold. However, determining forcing strengths or intervals remains delicate \citep{sip2026double}. For probabilistic DSR, KAFFEE provides a structural analogue: evaluating filtered innovations naturally induces a state/data-adaptive, anisotropic damping factor on the Jacobian product chain through the Kalman update. The effective damping strength is determined by the relative uncertainty between current state estimate and observations, paralleling the uncertainty-adaptive filtering/proposal perspective in \citep{pals2024inferring}. We formalize this connection to the stabilization of exploding gradients in Appx.~\ref{app:kaffee_gtf_appendix}.

\section{Discussion}
\label{sec:discussion}
This work exposes the DPC gap: improving finite-horizon probabilistic training objectives can degrade properties required for probabilistic DSR, and thus reliable scientific use.
In Gaussian open-loop rollout objectives, one mechanism causing the DPC gap is covariance-volume pressure: the same local expansion required for chaotic dynamics also increases
predictive volume, creating shortcuts toward core collapse or noise masking. Additionally, we showed that the DPC gap can arise from a too flexible noise parametrization (noise masking), or when uncertainty is not properly propagated through the dynamics (blind uncertainty). Our three-axis evaluation makes the DPC gap visible by separating local probabilistic behavior, shell-on stochastic fidelity, and shell-off core fidelity. Across AL-RNN and DynaMix experiments, KAFFEE mitigates the DPC gap by scoring
filtered innovations while keeping uncertainty tied to local Jacobian transport,
thereby preserving autonomous dynamics and aligning local predictive uncertainty
with local chaotic expansion.

KAFFEE uses the EKF and thus inherits its local Gaussian approximation, which may not capture the full complexity of the true predictive distribution in highly nonlinear regimes. While the approximation is sufficient to mitigate the DPC gap in our experiments, it may limit the method's ability to represent multimodal or heavy-tailed uncertainties that can arise in chaotic systems. Future work could explore richer filtering approaches, such as ensemble/unscented Kalman filters \citep{wan2000unscented,evensen2003ensemble}, particle filters \citep{doucet2000rao}, and (structured) variational approximations \citep{zhao2023revisiting, zhao2020variational,pals2024inferring,dowling2024exponential, brenner2024integrating}.

The EKF has a cubic cost in the number of latents (Appx. \ref{app:kaffee_scaling_appendix}). When fitting to intrinsically very high-dimensional systems, recent work on diagonal and low-rank approximations to the (E)KF's covariances could be applied \cite{chang2022diagonal,chang2023lowrank,schmidt2023rankreduced}. In the case of noise-free perfect latent models, work in chaotic DA has shown that the Kalman error covariance concentrates on the unstable-neutral subspace, whose dimension is the number of non-negative Lyapunov exponents \citep{trevisan2011kalman,carrassi2022data}. Under additive model/process noise, stable directions can retain bounded covariance \citep{grudzien2018chaotic}. This suggests that low-rank-plus-residual covariance approximations may be both efficient and dynamically aligned for high-dimensional probabilistic DSR.

Another exciting future direction is to leverage the connection between KAFFEE's Kalman update and gradient stabilization to design new physics-informed optimization algorithms for dynamically faithful probabilistic surrogates. Taken together, by diagnosing the DPC gap as well as demonstrating alleviating principles, our results mark a crucial step toward developing surrogate models that handle uncertainty in a principled way while remaining dynamically consistent.
\begin{ack}
This work was funded under the Excellence Strategy of the German Federal and State Governments; by the Federal Ministry of Research, Technology and Space (BMFTR) under the NeuroAI initiative, grant number 01GQ2509B; by the German Research Foundation (DFG) through individual grant DU 354/15-1 and Research Unit FOR 5159, subproject 11; and by the Hector II Foundation.
\end{ack}

{\small
\bibliographystyle{abbrvdunsrt}
\bibliography{refs}
}

\clearpage
\appendix
% !TEX root = main.tex

\section{Theory Details and Proofs}
\label{app:theory}

This section supplements the theory in the main text: the proof of volume pressure localization, a noisy-state attenuation
diagnostic, and the anti-contractive mechanism of the innovation NLL objective. We also discuss and formalize KAFFEE's connection to intervention-based gradient stabilization training methods, in particular generalized teacher forcing (GTF) \citep{hess2023generalized}.

\subsection{Alleviating core collapse via filtering localization}
\label{app:volume_pressure_proof}
In Section~\ref{subsec:dpc_mechanisms}, we discuss how Gaussian objectives applied to open-loop predictive distributions accumulate uncertainty under autonomous propagation.
In contrast, filtered innovation likelihood objectives repeatedly pull forecasts back through data assimilation, such that each likelihood term scores uncertainty propagated from a local post-assimilation belief. Here, we demonstrate this contrast with a diagnostic that isolates the covariance-volume component of the Gaussian objective. As in the main text, we denote filtered estimates by writing $\tau \mid t$ for an estimate at time $\tau$ conditioned on observations up to time $t$. Under local linearization, the open-loop covariance analyzed below is the \emph{free-forecast covariance} known in Bayesian filtering/data assimilation \citep{carrassi2022data}; in the linear time-varying case, its accumulated process-noise term is known as the \emph{controllability matrix} \citep{jazwinski1970stochastic}.

The following Proposition~\ref{prop:volume_pressure_localization} is a diagnostic result for the covariance-volume term in the Gaussian NLL. It does not claim that the full Gaussian objective always contracts the learned dynamics; the Mahalanobis term can oppose this pressure, especially when the predictive distribution is underdispersed. Instead, the proposition identifies a structural optimization shortcut available to open-loop likelihood training. With open-loop Jacobian covariance transport, shrinking Jacobian products directly reduces the volume penalty. Removing transport avoids this penalty, but also makes predictive uncertainty dynamically blind. Filtered innovation likelihood keeps Jacobian transport, yet scores it after assimilation has reset the belief state, so the long-horizon volume pressure is localized.

\begin{proposition}[Open-loop volume pressure and filtering localization]
\label{prop:volume_pressure_localization}
Assume $\bm\Lambda_{\theta_s}\succ0$, $\bm\Gamma_{\theta_s}\succeq0$, and $\bm\Sigma_t\succeq0$. Fix time step $t$ and horizon $h\ge1$. Along a local
expansion path $\bm J_{\theta_c,k}(s)=e^s\bm J_{\theta_c,k}$, the open-loop
observed covariance can be written as
\[
\bm\Omega^{\mathrm{open}}_{\theta,t+h}(s)
=
\bm\Lambda_{\theta_s}
+
\sum_{\ell=0}^{h} e^{2\ell s}\bm A_\ell,
\qquad
\bm A_\ell\succeq0,
\]
where $\bm A_\ell$ collects the observed-space covariance contributions that
have undergone $\ell$ Jacobian transports. Hence
\[
\frac{\partial}{\partial s}
\frac12\log\det\bm\Omega^{\mathrm{open}}_{\theta,t+h}(s)
=
\sum_{\ell=1}^{h}
\ell e^{2\ell s}
\tr \left[
\bm\Omega^{\mathrm{open}}_{\theta,t+h}(s)^{-1}
\bm A_\ell
\right]
\ge 0,
\]
with strict inequality whenever some transported contribution
$\bm A_\ell\neq0$ for $\ell\ge1$. Thus the open-loop log-determinant term
penalizes covariance contributions in proportion to how many Jacobian transports
they have undergone. If the covariance recursion uses no Jacobian transport,
this derivative is zero.

For a filtered one-step score, let
\[
\bm\Sigma_{t\mid t-1}(s)
=
e^{2s}
\bm J_{\theta_c,t}\bm\Sigma_{t-1\mid t-1}\bm J_{\theta_c,t}^{\top}
+
\bm\Gamma_{\theta_s},
\qquad
\bm\Omega_t(s)=
\bm B\bm\Sigma_{t\mid t-1}(s)\bm B^\top+\bm\Lambda_{\theta_s}.
\]
Then
\[
\frac{\partial}{\partial s}
\frac12\log\det\bm\Omega_t(s)
=
e^{2s}
\tr \left[
\bm\Omega_t(s)^{-1}
\bm B\bm J_{\theta_c,t}\bm\Sigma_{t-1\mid t-1}
\bm J_{\theta_c,t}^{\top}\bm B^\top
\right].
\]
Moreover, with Kalman gain
$\bm K_t=\bm\Sigma_{t\mid t-1}\bm B^\top\bm\Omega_t^{-1}$, the covariance update satisfies
\[
\bm\Sigma_{t\mid t}
=
\bm\Sigma_{t\mid t-1}
-
\bm\Sigma_{t\mid t-1}\bm B^\top
\bm\Omega_t^{-1}
 \bm B\bm\Sigma_{t\mid t-1}
\preceq
\bm\Sigma_{t\mid t-1}.
\]
Thus filtered innovation scoring keeps Jacobian-based covariance transport, but
evaluates it locally from a post-assimilation belief instead of on a long
open-loop covariance product.
\end{proposition}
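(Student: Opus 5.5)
We will expand the open-loop covariance formula, differentiate with Jacobi's formula, and check the Kalman update with a Schur-complement identity.

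\textbf{Open-loop expansion.} Start from the linearized open-loop covariance stated in Section~\ref{subsec:dpc_mechanisms},
\[
\bm\Sigma^{\mathrm{open}}_{t+h}=\bm\Phi_{\theta,t+h:t}\bm\Sigma_t\bm\Phi_{\theta,t+h:t}^\top+\sum_{j=0}^{h-1}\bm\Phi_{\theta,t+h:t+j+1}\bm\Gamma_{\theta_s}\bm\Phi_{\theta,t+h:t+j+1}^\top .
\]
Substitute $\bm J_{\theta_c,k}(s)=e^s\bm J_{\theta_c,k}$. A product of $\ell$ Jacobians then picks up a factor $e^{\ell s}$, so each sandwiched term $\bm\Phi\,\bm X\,\bm\Phi^\top$ scales by $e^{2\ell s}$.

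The initial-covariance term carries $\ell=h$ transports. The process-noise term indexed by $j$ carries $\ell=h-j-1\in\{0,\dots,h-1\}$ transports. Group all contributions with the same $\ell$ and push them through $\bm B(\cdot)\bm B^\top$; call the result $\bm A_\ell$.

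\begin{itemize}
\item For $\ell=h$, the initial-covariance term gives $\bm B\bm\Phi\bm\Sigma_t\bm\Phi^\top\bm B^\top$.
\item For each $\ell\le h-1$, the matching noise term gives $\bm B\bm\Phi\bm\Gamma_{\theta_s}\bm\Phi^\top\bm B^\top$.
\end{itemize}

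Each $\bm A_\ell$ is a congruence of a PSD matrix, hence PSD. This yields the claimed representation of $\bm\Omega^{\mathrm{open}}_{\theta,t+h}(s)$. Since $\bm\Lambda_{\theta_s}\succ0$, the matrix $\bm\Omega(s)\succ0$, so $\log\det$ is smooth in $s$.

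\textbf{Derivative.} Jacobi's formula gives $\partial_s\log\det\bm\Omega=\tr(\bm\Omega^{-1}\partial_s\bm\Omega)$, and $\partial_s\bm\Omega=\sum_\ell 2\ell e^{2\ell s}\bm A_\ell$. The factor $2$ cancels the $\tfrac12$, and the $\ell=0$ term drops out, leaving the stated sum.

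Nonnegativity uses $\tr(\bm P\bm A)=\tr(\bm P^{1/2}\bm A\bm P^{1/2})\ge0$ for $\bm P=\bm\Omega^{-1}\succ0$ and $\bm A\succeq0$. This trace is strictly positive iff $\bm A\neq0$, because $\bm P^{1/2}\bm A\bm P^{1/2}$ is a nonzero PSD matrix exactly when $\bm A\neq 0$. That gives the strictness claim.

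If no Jacobian transport is used (state-invariant recursion $\bm\Sigma_{t|t-1}=\bm\Sigma_{t-1|t-1}+\bm\Gamma$), then $\bm\Omega$ does not depend on $s$ and the derivative vanishes.

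\textbf{Filtered one-step score.} This is the special case $h=1$ with the single transported term $\bm B\bm J\bm\Sigma_{t-1|t-1}\bm J^\top\bm B^\top$. The same Jacobi computation gives the displayed formula directly.

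\textbf{Covariance contraction.} Substitute $\bm K_t=\bm\Sigma_{t|t-1}\bm B^\top\bm\Omega_t^{-1}$ into $\bm\Sigma_{t|t}=\bm\Sigma_{t|t-1}-\bm K_t\bm B\bm\Sigma_{t|t-1}$. This gives the subtracted matrix $\bm\Sigma_{t|t-1}\bm B^\top\bm\Omega_t^{-1}\bm B\bm\Sigma_{t|t-1}$, using the symmetry of $\bm\Sigma_{t|t-1}$. That matrix equals $\bm M^\top\bm\Omega_t^{-1}\bm M$ with $\bm M=\bm B\bm\Sigma_{t|t-1}$, hence it is PSD, which yields $\bm\Sigma_{t|t}\preceq\bm\Sigma_{t|t-1}$.

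\textbf{Main obstacle.} The only delicate step is the index bookkeeping in the open-loop expansion. One must check that $\bm\Phi_{\theta,t+h:t+j+1}$ contains exactly $h-j-1$ Jacobian factors under the paper's indexing convention, so that the exponents and the $\ell$ weights come out as stated. If the convention instead gives $h-j$ factors, only the labels on the $\bm A_\ell$ shift; the sign argument is unaffected.
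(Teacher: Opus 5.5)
Your proposal is correct and follows essentially the same route as the paper's proof. You expand the linearized open-loop covariance into terms indexed by the number of Jacobian transports (your count of $h-j-1$ factors in $\bm\Phi_{\theta,t+h:t+j+1}$ matches the paper's convention $\bm\Phi_{\theta,t+h:t}=\bm J_{\theta_c,t+h}\cdots\bm J_{\theta_c,t+1}$, so the hedge in your last paragraph is unnecessary), then apply Jacobi's formula with the $\tr(\bm\Omega^{-1/2}\bm A_\ell\bm\Omega^{-1/2})$ positivity argument, treat the filtered score as the $h=1$ case, and show that the subtracted Kalman term is a positive semidefinite congruence of $\bm\Omega_t^{-1}$.
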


\begin{proof}

Fix a forecast origin $t$ and horizon $h\ge1$. For notational clarity, we write
\[
\bm J_k:=\bm J_{\theta_c,k}.
\]
Consider the local expansion path
\[
\bm J_k(s)=e^s\bm J_k.
\]
For $\ell\ge1$, define the $\ell$-step Jacobian product ending at time $t+h$ by
\[
\bm\Phi^{(\ell)}_{t+h}:=\bm J_{t+h}\bm J_{t+h-1}\cdots \bm J_{t+h-\ell+1},
\]
and set $\bm\Phi^{(0)}_{t+h}:=\bm I$. Under the scaled path,
\[
\bm\Phi^{(\ell)}_{t+h}(s)
=
e^{\ell s}\bm\Phi^{(\ell)}_{t+h}.
\]

The open-loop latent covariance at time $t+h$ consists of the initial covariance
transported through $h$ Jacobians plus process-noise contributions transported
through fewer Jacobians. Thus, after projection to observation space,
\[
\bm\Omega^{\mathrm{open}}_{\theta,t+h}(s)
=
\bm\Lambda_{\theta_s}
+
e^{2hs}
\bm B\bm\Phi^{(h)}_{t+h}\bm\Sigma_t
(\bm\Phi^{(h)}_{t+h})^\top\bm B^\top
+
\sum_{\ell=0}^{h-1}
e^{2\ell s}
\bm B\bm\Phi^{(\ell)}_{t+h}
\bm\Gamma_{\theta_s}
(\bm\Phi^{(\ell)}_{t+h})^\top\bm B^\top .
\]
Equivalently, this can be written as
\[
\bm\Omega^{\mathrm{open}}_{\theta,t+h}(s)
=
\bm\Lambda_{\theta_s}
+
\sum_{\ell=0}^{h} e^{2\ell s}\bm A_\ell,
\]
where
\[
\bm A_h
:=
\bm B\bm\Phi^{(h)}_{t+h}\bm\Sigma_t
(\bm\Phi^{(h)}_{t+h})^\top\bm B^\top,
\]
and, for $0\le \ell\le h-1$,
\[
\bm A_\ell
:=
\bm B\bm\Phi^{(\ell)}_{t+h}
\bm\Gamma_{\theta_s}
(\bm\Phi^{(\ell)}_{t+h})^\top\bm B^\top .
\]
Since $\bm\Sigma_t\succeq0$ and $\bm\Gamma_{\theta_s}\succeq0$, each
$\bm A_\ell\succeq0$. Since $\bm\Lambda_{\theta_s}\succ0$, we also have
\[
\bm\Omega^{\mathrm{open}}_{\theta,t+h}(s)\succ0
\qquad
\text{for all }s\in\mathbb R.
\]

Now differentiate the log-determinant term. Using
\[
\frac{\partial}{\partial s}\log\det \bm\Omega(s)
=
\tr \left(\bm\Omega(s)^{-1}\bm\Omega'(s)\right),
\]
we obtain
\[
\frac{\partial}{\partial s}
\frac12\log\det\bm\Omega^{\mathrm{open}}_{\theta,t+h}(s)
=
\frac12
\tr \left[
\bm\Omega^{\mathrm{open}}_{\theta,t+h}(s)^{-1}
\frac{\partial}{\partial s}\bm\Omega^{\mathrm{open}}_{\theta,t+h}(s)
\right].
\]
Because
\[
\frac{\partial}{\partial s}\bm\Omega^{\mathrm{open}}_{\theta,t+h}(s)
=
\sum_{\ell=1}^{h}
2\ell e^{2\ell s}\bm A_\ell,
\]
this gives
\[
\frac{\partial}{\partial s}
\frac12\log\det\bm\Omega^{\mathrm{open}}_{\theta,t+h}(s)
=
\sum_{\ell=1}^{h}
\ell e^{2\ell s}
\tr \left[
\bm\Omega^{\mathrm{open}}_{\theta,t+h}(s)^{-1}
\bm A_\ell
\right].
\]
For each $\ell$, the matrix
$\bm\Omega^{\mathrm{open}}_{\theta,t+h}(s)^{-1}$ is positive definite and
$\bm A_\ell$ is positive semidefinite. Therefore
\[
\tr \left[
\bm\Omega^{\mathrm{open}}_{\theta,t+h}(s)^{-1}
\bm A_\ell
\right]\ge0.
\]
Moreover, if $\bm A_\ell\neq0$, then this trace is strictly positive. To see
this, write
\[
\tr \left[
\bm\Omega^{-1}\bm A_\ell
\right]
=
\tr \left[
\bm\Omega^{-1/2}\bm A_\ell\bm\Omega^{-1/2}
\right],
\]
where the matrix inside the trace is nonzero positive semidefinite whenever
$\bm A_\ell\neq0$. Hence the derivative is nonnegative, and it is strictly
positive whenever at least one transported contribution $\bm A_\ell\neq0$ for
some $\ell\ge1$.

If the covariance recursion uses no Jacobian transport, then the observed
covariance has no dependence on the scaled Jacobians along this path. Equivalently,
all terms with $\ell\ge1$ are absent. The same derivative is then zero. This
proves the open-loop part.

We next prove the filtered localization statement. The one-step filtered
predictive covariance along the same scaling path is
\[
\bm\Sigma_{t\mid t-1}(s)
=
e^{2s}
\bm J_{\theta_c,t}\bm\Sigma_{t-1\mid t-1}
\bm J_{\theta_c,t}^{\top}
+
\bm\Gamma_{\theta_s}.
\]
The corresponding innovation covariance is
\[
\bm\Omega_t(s)
=
\bm B\bm\Sigma_{t\mid t-1}(s)\bm B^\top+\bm\Lambda_{\theta_s}.
\]
Since $\bm\Lambda_{\theta_s}\succ0$, $\bm\Omega_t(s)\succ0$. Differentiating as
above gives
\[
\frac{\partial}{\partial s}
\frac12\log\det\bm\Omega_t(s)
=
\frac12
\tr \left[
\bm\Omega_t(s)^{-1}
\frac{\partial}{\partial s}\bm\Omega_t(s)
\right].
\]
Moreover,
\[
\frac{\partial}{\partial s}\bm\Omega_t(s)
=
2e^{2s}
\bm B\bm J_{\theta_c,t}\bm\Sigma_{t-1\mid t-1}
\bm J_{\theta_c,t}^{\top}\bm B^\top.
\]
Therefore
\[
\frac{\partial}{\partial s}
\frac12\log\det\bm\Omega_t(s)
=
e^{2s}
\tr \left[
\bm\Omega_t(s)^{-1}
\bm B\bm J_{\theta_c,t}\bm\Sigma_{t-1\mid t-1}
\bm J_{\theta_c,t}^{\top}\bm B^\top
\right],
\]
as claimed.

Finally, write the Kalman gain as
\[
\bm K_t
=
\bm\Sigma_{t\mid t-1}\bm B^\top\bm\Omega_t^{-1},
\qquad
\bm\Omega_t
=
\bm B\bm\Sigma_{t\mid t-1}\bm B^\top+\bm\Lambda_{\theta_s}.
\]
The covariance update can be written as
\[
\bm\Sigma_{t\mid t}
=
(\bm I-\bm K_t\bm B)\bm\Sigma_{t\mid t-1}.
\]
Substituting the expression for $\bm K_t$ gives
\[
\bm\Sigma_{t\mid t}
=
\bm\Sigma_{t\mid t-1}
-
\bm\Sigma_{t\mid t-1}\bm B^\top
\bm\Omega_t^{-1}
\bm B\bm\Sigma_{t\mid t-1}.
\]
The subtracted term is positive semidefinite, since
\[
\bm\Sigma_{t\mid t-1}\bm B^\top
\bm\Omega_t^{-1}
\bm B\bm\Sigma_{t\mid t-1}
=
\left(
\bm\Sigma_{t\mid t-1}\bm B^\top\bm\Omega_t^{-1/2}
\right)
\left(
\bm\Sigma_{t\mid t-1}\bm B^\top\bm\Omega_t^{-1/2}
\right)^\top
\succeq0.
\]
Hence
\[
\bm\Sigma_{t\mid t}
\preceq
\bm\Sigma_{t\mid t-1}.
\]
Thus the filtered objective still scores Jacobian-transported covariance, but
after each observation the covariance is contracted before the next prediction.
\end{proof}

\subsection{Noisy-state attenuation and innovation identifiability}
\label{app:attenuation_diagnostic}

Here we discuss a scalar diagnostic of classical noisy-state attenuation \citep{fuller1987measurement,staudenmayer2005measurement} and its relationship to the filtered innovation likelihood.  The preceding argument concerned Gaussian objectives applied to open-loop predictive distributions, where uncertainty accumulates under autonomous propagation. The filtered innovation likelihood however behaves differently in the linear-Gaussian setting, because it evaluates only local post-assimilation uncertainty. We emphasize that the result is deliberately local; it does not imply global consistency of EKF training for nonlinear chaotic systems. Its role is to separate two mechanisms: If noisy observations are directly treated as states, Gaussian prediction estimates an attenuated observation-to-observation regression coefficient. If, however, the latent state is inferred and the exact Kalman innovation likelihood is used in a correctly specified linear-Gaussian state-space model, the latent multiplier is identifiable.

Let the true data-generating model be
\[
z_1\sim\mathcal N(0,p_\star),\qquad z_{t+1}=a_\star z_t+\xi_t,\qquad x_t=z_t+\varepsilon_t,
\]
with
\[
\xi_t \overset{\mathrm{i.i.d.}}{\sim}\mathcal N(0,q_\star), \qquad \varepsilon_t \overset{\mathrm{i.i.d.}}{\sim}\mathcal N(0,r_\star),
\]
with $z_1$, $\{\xi_t\}_{t\ge1}$, and $\{\varepsilon_t\}_{t\ge1}$ mutually independent, $p_\star,q_\star,r_\star>0$, and $a_\star\neq0$. We assume zero-mean only for notational convenience, the same calculations hold with affine offsets after centering. We write
\[
P_t^\star:=\operatorname{Var}(z_t).
\]

\textbf{Noisy-state predictive attenuation.}
Consider the misspecified noisy-state predictive family
\[
q_{a,v}(x_{t+1}\mid x_t) = \mathcal N(x_{t+1};a x_t,v),\qquad v>0.
\]
The expected Gaussian NLL at a fixed time $t$ is
\[
\mathcal L_{\mathrm{pred}}(a,v)
=
\frac12\log(2\pi v)
+
\frac{1}{2v}
\E_\star \left[(x_{t+1}-a x_t)^2\right],
\]
where $\E_\star$ denotes expectation under the true data-generating model. For fixed $a$,
the optimal variance is
\[
v^\dagger(a)
=
\E_\star \left[(x_{t+1}-a x_t)^2\right].
\]
Therefore minimizing the profiled (i.e., variance-optimized) Gaussian NLL over $a$ is equivalent to
minimizing the mean squared prediction error. Hence the optimal multiplier is the classical population least-squares coefficient under errors-in-variables \citep{fuller1987measurement,staudenmayer2005measurement}:
\[
a_{\mathrm{pred}}^\dagger
=
\frac{\E_\star[x_{t+1}x_t]}{\E_\star[x_t^2]}.
\]
Using
\[
x_t=z_t+\varepsilon_t,
\qquad
x_{t+1}=a_\star z_t+\xi_t+\varepsilon_{t+1},
\]
and independence of $z_t,\xi_t,\varepsilon_t,\varepsilon_{t+1}$, we obtain
\[
\E_\star[x_{t+1}x_t]
=
a_\star\operatorname{Var}(z_t)
=
a_\star P_t^\star,
\]
and
\[
\E_\star[x_t^2]
=
\operatorname{Var}(z_t)+\operatorname{Var}(\varepsilon_t)
=
P_t^\star+r_\star.
\]
Thus
\[
a_{\mathrm{pred}}^\dagger
=
a_\star
\frac{P_t^\star}{P_t^\star+r_\star}.
\]
Since $P_t^\star>0$ and $r_\star>0$, the attenuation factor lies strictly in
$(0,1)$. Consequently,
\[
|a_{\mathrm{pred}}^\dagger|<|a_\star|.
\]
In particular, an expanding latent direction can be estimated as stable if
\[
|a_\star|
\frac{P_t^\star}{P_t^\star+r_\star}
<1.
\]

For reference, the same calculation gives the pooled one-step version often
implicit in finite-window objectives. If a single coefficient $a$ and a shared variance are fit over times $t=1,\ldots,T-1$, then
\[
a_{\mathrm{pred},T}^\dagger
=
a_\star
\frac{\sum_{t=1}^{T-1}P_t^\star}
	{\sum_{t=1}^{T-1}(P_t^\star+r_\star)},
\]
again a strict attenuation whenever $r_\star>0$.

\textbf{Innovation identifiability in the correctly specified family.}
Now consider the correctly specified state-space family
\[
z_1\sim\mathcal N(0,p),\qquad
z_{t+1}=az_t+\xi_t,\qquad
x_t=z_t+\varepsilon_t,
\]
with
\[
\xi_t\overset{\mathrm{i.i.d.}}{\sim}\mathcal N(0,q),
\qquad
\varepsilon_t\overset{\mathrm{i.i.d.}}{\sim}\mathcal N(0,r),
\]
independently of $z_1$ and of each other with $p,q,r>0$. Let $\theta=(a,q,r,p)$, and let $\mathbb P_\theta^{1:T}$ denote the joint distribution of $x_{1:T}$, with density $p_\theta(x_{1:T})$. Since the model is linear-Gaussian, the Kalman innovation factorization is the exact likelihood:
\[
p_\theta(x_{1:T})
=
p_\theta(x_1)
\prod_{t=2}^{T}
p_\theta(x_t\mid x_{1:t-1})
=
p_\theta(x_1)
\prod_{t=2}^{T}
\mathcal N(x_t;\hat x^\theta_{t\mid t-1},\Omega^\theta_t).
\]
The expected innovation NLL is therefore
\[
\mathcal L_{\mathrm{KF}}(\theta)
:=
\E_{\theta_\star} \left[-\log p_\theta(x_{1:T})\right],
\]
with $\theta_\star$ the true data-generating parameters. Equivalently,
\[
\mathcal L_{\mathrm{KF}}(\theta)
=
H(\mathbb P_{\theta_\star}^{1:T})
+
\KL \left(
\mathbb P_{\theta_\star}^{1:T}
\,\Vert\,
\mathbb P_{\theta}^{1:T}
\right).
\]
Since $\theta_\star$ belongs to the candidate family and the KL term is non-negative, the minimum value is attained at $\theta_\star$. Therefore every population-risk minimizer must satisfy
\[
\mathbb P_{\theta}^{1:T}=\mathbb P_{\theta_\star}^{1:T}.
\]
Conversely, $\theta_\star=(a_\star,q_\star,r_\star,p_\star)$ achieves equality,
so it remains only to show identifiability.

Because all laws are zero-mean Gaussian, equality of
 $\mathbb P_{\theta}^{1:T} $ and  $\mathbb P_{\theta_\star}^{1:T} $ is equivalent to equality of their covariance matrices. For $T\ge3$, the covariance entries are:
\[
\sigma_{11}:=\operatorname{Cov}_\theta(x_1,x_1)=p+r,
\]
\[
\sigma_{21}:=\operatorname{Cov}_\theta(x_2,x_1)=ap,
\]
\[
\sigma_{31}:=\operatorname{Cov}_\theta(x_3,x_1)=a^2p,
\]
and
\[
\sigma_{22}:=\operatorname{Cov}_\theta(x_2,x_2)=a^2p+q+r.
\]
Since $a_\star\neq0$ and $p_\star>0$,
\[
\sigma_{21}^\star=a_\star p_\star\neq0.
\]
Hence equality of the observation laws implies
\[
a
=
\frac{\sigma_{31}}{\sigma_{21}}
=
\frac{\sigma_{31}^\star}{\sigma_{21}^\star}
=
a_\star.
\]
Then
\[
p
=
\frac{\sigma_{21}}{a}
=
\frac{\sigma_{21}^\star}{a_\star}
=
p_\star.
\]
Next,
\[
r
=
\sigma_{11}-p
=
\sigma_{11}^\star-p_\star
=
r_\star.
\]
Finally,
\[
q
=
\sigma_{22}-a^2p-r
=
\sigma_{22}^\star-a_\star^2p_\star-r_\star
=
q_\star.
\]
Therefore equality of the observation laws implies
\[
(a,q,r,p)=(a_\star,q_\star,r_\star,p_\star), 
\]
and the expected Kalman innovation NLL has the unique population-risk minimizer
\[
(a^\dagger_{\mathrm{KF}},q^\dagger_{\mathrm{KF}},
r^\dagger_{\mathrm{KF}},p^\dagger_{\mathrm{KF}})
=
(a_\star,q_\star,r_\star,p_\star).
\]

\subsection{Large-sample targets of noisy-state and innovation objectives}
\label{app:linear_consistency}

The preceding discussion identifies the population targets
of two objectives. We now record the corresponding large-sample limit consequences. The asymptotic regime here is $n\to\infty$ independent finite windows, with the
window length $T$ fixed and with $t$ and $H$ fixed in the relevant parts. The statement is formulated for independent finite windows, so it
does not require stationarity of the scalar process.
Let
\[
X_{1:T}^{(i)}=(x_1^{(i)},\ldots,x_T^{(i)}),
\qquad i=1,\ldots,n,
\]
be independent length-$T$ windows generated by the true data-generating model $x_t = z_t + \varepsilon_t$ in Appx.~\ref{app:attenuation_diagnostic}.  All expectations are under the true data-generating parameters
\[
\theta_\star=(a_\star,q_\star,r_\star,p_\star).
\]

\begin{proposition}[Large-sample inconsistency of noisy-state open-loop training]
\label{prop:linear_m_estimation}
Assume that the optimization domains for $a$, $p,q,r$, and all fitted variances $v,v_{1:H}$ are compact, with all variance parameters bounded below by a positive constant, and that the relevant population minimizers lie in the
interior of these domains. Also assume that $\theta_\star$ lies in the interior of the state-space model parameter domain. Then:

\textnormal{(i) Noisy-state one-step objective.}
For fixed $t$, define
\[
\widehat{\mathcal L}_{\mathrm{pred},n}(a,v)
=
\frac1n\sum_{i=1}^n
-\log \mathcal N \left(x_{t+1}^{(i)};a x_t^{(i)},v\right).
\]
Every sequence of empirical global minimizers $(\hat a_n,\hat v_n)$ satisfies, as $n\to\infty$,
\[
\hat a_n
\longrightarrow
a_\star\frac{P_t^\star}{P_t^\star+r_\star}
\qquad
\text{almost surely}.
\]
Thus the noisy-state predictor is inconsistent for $a_\star$ whenever
$r_\star>0$.

\textnormal{(ii) Noisy-state $H$-step open-loop objective.}
Assume $a_\star>0$ and optimize $a$ over a compact subset of $(0,\infty)$ containing the relevant population minimizers. For $H\ge1$ and $t+H\le T$, define
\[
\widehat{\mathcal L}_{\mathrm{open},n}(a,v_{1:H})
=
\frac1n\sum_{i=1}^n
\sum_{h=1}^{H}
-\log \mathcal N \left(x_{t+h}^{(i)};a^h x_t^{(i)},v_h\right).
\]
Let
\[
\kappa_t:=\frac{P_t^\star}{P_t^\star+r_\star}\in(0,1).
\]
Every accumulation point of empirical global minimizers $\hat a_n$ as $n\to\infty$ lies in the
population minimizer set, and every population minimizer satisfies
\[
\kappa_t a_\star
\le
a_{\mathrm{open},H}^\dagger
\le
\kappa_t^{1/H}a_\star
<
a_\star .
\]
In particular, the noisy-state open-loop objective is inconsistent for the
latent multiplier. If $a_\star>1$ and
\[
\kappa_t^{1/H}a_\star<1,
\]
then every population minimizer is stable even though the true latent multiplier
is expanding.

\textnormal{(iii) Exact innovation objective.}
Let
\[
\widehat{\mathcal L}_{\mathrm{KF},n}(\theta)
=
\frac1n\sum_{i=1}^n
-\log p_\theta(x_{1:T}^{(i)})
\]
be the exact Kalman innovation NLL for the correctly specified state-space
family defined in Appx.~\ref{app:attenuation_diagnostic}. For $T\ge3$, every
sequence of empirical global minimizers $\hat\theta_n$ satisfies, as $n\to\infty$,
\[
\hat\theta_n
\longrightarrow
\theta_\star
=
(a_\star,q_\star,r_\star,p_\star)
\qquad
\text{almost surely}.
\]
\end{proposition}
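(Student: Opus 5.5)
All three parts will be handled by the same M-estimation template. Because the windows $X^{(i)}_{1:T}$ are i.i.d. with Gaussian (hence all-moment) entries, and the parameter domains are compact with variances bounded below, each per-window loss is continuous in the parameters. It is dominated by an integrable envelope of the form $c_1+c_2\sum_t x_t^2$: the log-variance terms are bounded, and the quadratic terms are bounded by $\|x\|^2$ times a constant, using $v\ge v_{\min}$. The uniform strong law of large numbers on a compact set therefore gives almost-sure uniform convergence of each empirical objective to its population counterpart. The standard argmin-consistency argument (Wald style) then says that every accumulation point of empirical global minimizers lies in the population minimizer set. If that set is a singleton, compactness upgrades this to almost-sure convergence of the whole sequence. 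Once this template is set up, each part reduces to identifying the population minimizer set.

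For (i), I will use the calculation in Appx.~\ref{app:attenuation_diagnostic}. Profiling out $v$ reduces the problem to minimizing $\E_\star[(x_{t+1}-ax_t)^2]$, a strictly convex quadratic in $a$. Its unique minimizer is $a_\star P_t^\star/(P_t^\star+r_\star)$, with $v$ determined by it. Interiority ensures this point lies in the domain, so the template yields almost-sure convergence.

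For (iii), Appx.~\ref{app:attenuation_diagnostic} already shows that the population innovation NLL equals entropy plus KL divergence. It also shows, for $T\ge3$, that equality of the observation laws forces $\theta=\theta_\star$. The minimizer is therefore unique, and the template applies. The envelope is fine here: $-\log p_\theta(x_{1:T})$ is a Gaussian log-density whose covariance is continuous in $\theta$ and uniformly positive definite (since $r$ is bounded below), so it is dominated by a quadratic.

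The main work is (ii). Here the variances $v_h$ decouple, and profiling each one gives the population objective
\[
g(a)=\sum_{h=1}^H \tfrac12\log m_h(a),\qquad m_h(a)=\E_\star[(x_{t+h}-a^hx_t)^2].
\]
Writing $S=P_t^\star+r_\star$ and $b_h=\E_\star[x_{t+h}x_t]=a_\star^hP_t^\star$, this becomes $m_h(a)=S(a^h-\kappa_ta_\star^h)^2+c_h$ with $c_h>0$. Each $m_h$ is minimized over $a>0$ exactly at $a_h:=\kappa_t^{1/h}a_\star$. For $u$ below every $a_h$, each summand is strictly decreasing in $a$; for $u$ above every $a_h$, each summand is strictly increasing. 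Since $\kappa_t\in(0,1)$, the $a_h$ are increasing in $h$, so any minimizer of $g$ must lie in $[a_1,a_H]=[\kappa_ta_\star,\kappa_t^{1/H}a_\star]$. The bound $\kappa_t^{1/H}<1$ gives $a_H<a_\star$. The stability conclusion is immediate from the upper bound.

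The obstacle I anticipate is that the population minimizer set in (ii) need not be a singleton. This is why the statement is phrased in terms of accumulation points, and the template's set-valued version handles it. One more point needs care: the compact domain is a subset of $(0,\infty)$ that contains the minimizers. Its boundary points outside $[a_1,a_H]$ are then strictly suboptimal by the monotonicity argument, so the minimizers in the domain coincide with the global population minimizers on $(0,\infty)$.
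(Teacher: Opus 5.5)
Your proposal is correct and follows the paper's proof essentially step for step. It uses the same uniform-convergence argmin template, the same profiling of the variances, the same decomposition $m_h(a)=\operatorname{Var}_\star(x_t)\,(a^h-\kappa_t a_\star^h)^2+c_h$ with the sign argument around $\kappa_t^{1/h}a_\star$, and the same entropy-plus-KL identifiability argument for (iii); the only differences are that you derive $m_h$ from second moments rather than via Gaussian conditioning $\E_\star[z_t\mid x_t]=\kappa_t x_t$, and you make explicit the integrable envelope and domain bookkeeping that the paper leaves implicit.
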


\begin{proof}
We use the argmin argument for empirical risk minimization. Because the
parameter domains are compact, the variances are bounded away from zero, and the
Gaussian log-scores are continuous with finite moments under the true
linear-Gaussian model, the empirical objectives converge uniformly almost surely
to their expected objectives. Therefore empirical global minimizers converge to
the set of population global minimizers. We now identify those population
minimizers.

Part (i) follows directly from the attenuation calculation in Appx.~\ref{app:attenuation_diagnostic}: the expected one-step noisy-state
Gaussian objective has the unique multiplier
\[
a_{\mathrm{pred}}^\dagger
=
a_\star\frac{P_t^\star}{P_t^\star+r_\star}.
\]
Since $r_\star>0$, $|a_{\mathrm{pred}}^\dagger|<|a_\star|$, so the noisy-state objective attenuates the multiplier toward zero. 

For part \textnormal{(ii)}, Gaussian conditioning gives
\[
\E_\star[z_t\mid x_t]
=
\kappa_t x_t,
\qquad
\kappa_t
=
\frac{P_t^\star}{P_t^\star+r_\star}.
\]
For each $h\ge1$,
\[
x_{t+h}
=
a_\star^h z_t
+
\sum_{\ell=0}^{h-1}
a_\star^\ell \xi_{t+h-1-\ell}
+
\varepsilon_{t+h}.
\]
Hence
\[
\E_\star[x_{t+h}\mid x_t]
=
\kappa_t a_\star^h x_t.
\]
Writing
\[
x_{t+h}
=
\kappa_t a_\star^h x_t+\eta_{t,h},
\qquad
\E_\star[\eta_{t,h}\mid x_t]=0,
\]
with conditional variance $\tau_{t,h}^{2,\star}>0$, the expected squared
residual of the $h$-step open-loop mean $a^h x_t$ is
\[
\E_\star[(x_{t+h}-a^h x_t)^2]
=
\tau_{t,h}^{2,\star}
+
\operatorname{Var}_\star(x_t)
\left(a^h-\kappa_t a_\star^h\right)^2.
\]
For fixed $a$, the optimal $v_h$ is this expected squared residual. Since each $v_h$ is optimized independently, profiling (optimizing) out $v_{1:H}$ yields the population objective
\[
J_H(a)
=
\frac12\sum_{h=1}^H
\log \left[
\tau_{t,h}^{2,\star}
+
\operatorname{Var}_\star(x_t)
\left(a^h-\kappa_t a_\star^h\right)^2
\right].
\]
For $a>0$, the derivative of the $h$-th summand has sign
\[
\operatorname{sign}
 \left(a-\kappa_t^{1/h}a_\star\right).
\]
Because $\kappa_t\in(0,1)$,
\[
\kappa_t a_\star
=
\kappa_t^{1/1}a_\star
\le
\kappa_t^{1/h}a_\star
\le
\kappa_t^{1/H}a_\star
<
a_\star .
\]
Therefore $J_H'(a)<0$ for $0<a<\kappa_t a_\star$, and
$J_H'(a)>0$ for $a>\kappa_t^{1/H}a_\star$. Every population minimizer lies in
\[
[\kappa_t a_\star,\ \kappa_t^{1/H}a_\star],
\]
which proves the stated open-loop inconsistency. The stability claim follows
immediately from the upper bound.

For part (iii), the identifiability calculation in Appx.~\ref{app:attenuation_diagnostic} shows that the expected innovation NLL has the unique minimizer
\[
\theta_\star=(a_\star,q_\star,r_\star,p_\star)
\]
for $T\ge3$. Uniform convergence of
$\widehat{\mathcal L}_{\mathrm{KF},n}$ to its expectation and uniqueness of
the population minimizer imply
\[
\hat\theta_n\to\theta_\star
\]
almost surely.
\end{proof}

\subsection{Local anti-contraction of the innovation NLL}
\label{app:anti_contraction}

We next turn to a discussion of the anti-contractive mechanism of the innovation NLL objective, which underlies KAFFEE's ability to mitigate the compounding volume-pressure in optimization of open-loop likelihoods. The following result explains why, once Jacobian-based covariance transport makes the innovation covariance too small relative to the true one-step predictive covariance, further contraction cannot minimize the expected innovation NLL. In this sense, KAFFEE can be seen as a data-dependent tangent-space regularization.

To isolate a single contraction direction, fix a window and a reference pair $\theta=(\theta_c,\theta_s)$. Restrict to
Jacobians of the form
\[
\bm J_{\theta_c,t}(s)=s\,\bar{\bm J}_{\theta_c,t}
\]
along a chosen covariance direction, and write
\[
\bm A_{\theta,t}
:=
\bm B\bar{\bm J}_{\theta_c,t}
\bm \Sigma_{t-1\mid t-1}
\bar{\bm J}_{\theta_c,t}^{\top}\bm B^\top,
\qquad
\bm C_{\theta_s}
:=
\bm B\bm\Gamma_{\theta_s}\bm B^\top+\bm\Lambda_{\theta_s}.
\]
Assume $\bm C_{\theta_s}\succ0$. Define
\[
\bm\Omega_{\theta,t}(s):=s^2\bm A_{\theta,t}+\bm C_{\theta_s},
\qquad
\bm\Omega_t^\star:=\operatorname{Cov}(\bm x_t\mid \bm x_{1:t-1}),
\]
and
\[
\mathcal L(s)
:=
\frac12\sum_{t=1}^T
\left[
\log\det \bm\Omega_{\theta,t}(s)
+
\tr \left(
\bm\Omega_{\theta,t}(s)^{-1}\bm\Omega_t^\star
\right)
\right].
\]

\begin{Lemma}[Anti-contraction along a covariance-rescaling path]	
\label{lem:anti_contraction}
This lemma studies a local covariance-scaling path at fixed filtered covariances $\{\bm\Sigma_{t-1\mid t-1}\}_{t=1}^T$ and fixed innovation means. It therefore characterizes the partial derivative of the covariance part of the expected innovation NLL with respect to a local Jacobian-rescaling direction (ignoring the recursion through all earlier EKF covariances).
If there exists $s_0>0$ such that
\[
\bm\Omega_{\theta,t}(s_0)\preceq \bm\Omega_t^\star
\qquad \forall t,
\]
with
\[
\bm\Omega_{\theta,t_\star}(s_0)\prec \bm\Omega_{t_\star}^\star
\]
for at least one $t_\star$ satisfying $\bm A_{\theta,t_\star}\neq0$, then
\[
\frac{\partial}{\partial s}\mathcal L(s)<0
\qquad
\text{for all }0<s\le s_0.
\]
Thus no local minimizer of $\mathcal L$ lies in $(0,s_0]$ of this contraction path.
\end{Lemma}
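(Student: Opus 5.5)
The plan is to differentiate $\mathcal L(s)$ term by term and show that each summand has a nonpositive derivative on $(0,s_0]$, with at least one strictly negative.

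For fixed $t$, write $\bm\Omega(s):=\bm\Omega_{\theta,t}(s)$, so that $\bm\Omega'(s)=2s\bm A_{\theta,t}$. We use $\partial_s\log\det\bm\Omega=\tr(\bm\Omega^{-1}\bm\Omega')$, as in the proof of Proposition~\ref{prop:volume_pressure_localization}, together with $\partial_s\bm\Omega^{-1}=-\bm\Omega^{-1}\bm\Omega'\bm\Omega^{-1}$. This gives
\[
\frac{\partial}{\partial s}\mathcal L(s)=\sum_{t=1}^T s\,\tr\!\left[\bm\Omega_{\theta,t}(s)^{-1}\bm A_{\theta,t}\bm\Omega_{\theta,t}(s)^{-1}\bigl(\bm\Omega_{\theta,t}(s)-\bm\Omega_t^\star\bigr)\right].
\]
Here $\bm\Omega_{\theta,t}(s)\succ0$ because $\bm C_{\theta_s}\succ0$ and $\bm A_{\theta,t}\succeq0$. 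The latter holds since $\bm\Sigma_{t-1\mid t-1}\succeq0$.

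The next step is to transfer the hypothesis from $s_0$ to every $s\in(0,s_0]$. Because $s^2\le s_0^2$ and $\bm A_{\theta,t}\succeq0$, we have $\bm\Omega_{\theta,t}(s)\preceq\bm\Omega_{\theta,t}(s_0)\preceq\bm\Omega_t^\star$. Hence $\bm D_t:=\bm\Omega_t^\star-\bm\Omega_{\theta,t}(s)\succeq0$. At $t_\star$ we get $\bm D_{t_\star}\succ0$, since the strict inequality at $s_0$ is preserved when we add the positive semidefinite matrix $(s_0^2-s^2)\bm A_{\theta,t_\star}$.

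Each trace is then $-\tr[\bm M_t\bm D_t]$ with $\bm M_t:=\bm\Omega^{-1}\bm A\bm\Omega^{-1}\succeq0$, the congruence of a PSD matrix by a symmetric one. The trace of a product of two PSD matrices is nonnegative, since $\tr[\bm M\bm D]=\tr[\bm D^{1/2}\bm M\bm D^{1/2}]\ge0$. So every summand of $\partial_s\mathcal L$ is $\le0$, given $s>0$.

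For strictness at $t_\star$, note that $\bm A_{\theta,t_\star}\neq0$ and $\bm\Omega^{-1}$ is invertible, so $\bm M_{t_\star}$ is a nonzero PSD matrix. Also $\bm D_{t_\star}\succ0$, so $\tr[\bm D^{1/2}\bm M\bm D^{1/2}]>0$, because $\bm D^{1/2}\bm M\bm D^{1/2}$ is nonzero PSD. Summing gives $\partial_s\mathcal L(s)<0$ on $(0,s_0]$. Therefore no local minimizer of $\mathcal L$ lies there: at any interior point the derivative would have to vanish, and at $s_0$ itself a strictly negative derivative means $\mathcal L$ decreases as $s$ increases past $s_0$.

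The only delicate step is the monotonicity transfer from $s_0$ to smaller $s$, which preserves the strict inequality at $t_\star$. Once that is in place, the rest is standard PSD trace calculus.
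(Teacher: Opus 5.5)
Your proposal is correct and follows essentially the same route as the paper: the same derivative formula, the same monotonicity transfer $\bm\Omega_{\theta,t}(s)\preceq\bm\Omega_{\theta,t}(s_0)\preceq\bm\Omega_t^\star$ for $0<s\le s_0$ (which keeps the strict inequality at $t_\star$), and the same nonnegativity and strict positivity argument for the trace of a product of PSD matrices. The only cosmetic difference is that you pair $\bm\Omega^{-1}\bm A\bm\Omega^{-1}$ directly with $\bm\Omega_t^\star-\bm\Omega_{\theta,t}(s)$, whereas the paper whitens to $\tr[\bm M_t(\bm I-\bm Z_t)]$ with $\bm Z_t=\bm\Omega^{-1/2}\bm\Omega_t^\star\bm\Omega^{-1/2}$; you also spell out why $s_0$ itself cannot be a local minimizer, which the paper only asserts.
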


\subsection{Proof of Lemma~\ref{lem:anti_contraction}}
\begin{proof}
Since $\bm\Sigma_{t-1\mid t-1}\succeq 0$, we have $\bm A_{\theta,t}\succeq 0$. The assumption $\bm C_{\theta_s}\succ0$ implies
$\bm\Omega_{\theta,t}(s)\succ0$ for all $s\ge0$, so the inverse square roots below are well-defined.

Write $\theta=(\theta_c,\theta_s)$ and $\bm \Omega_t^\star:=\operatorname{Cov}(\bm x_t\mid \bm x_{1:t-1})$. For each $t$,
\[
\bm \Omega_{\theta,t}(s)=s^2\bm A_{\theta,t}+\bm C_{\theta_s},
\qquad
\frac{\partial}{\partial s}\bm \Omega_{\theta,t}(s)=2s\bm A_{\theta,t}.
\]
Differentiating $\mathcal L$ gives
\[
\frac{\partial}{\partial s}\mathcal L(s)
=
\frac12\sum_{t=1}^T
\left(
\tr \big(\bm \Omega_{\theta,t}(s)^{-1}\bm \Omega_{\theta,t}'(s)\big)
-
\tr \big(\bm \Omega_{\theta,t}(s)^{-1}\bm \Omega_{\theta,t}'(s)\bm \Omega_{\theta,t}(s)^{-1}\bm \Omega_t^\star\big)
\right).
\]
Substituting $\bm \Omega_{\theta,t}'(s)=2s\bm A_{\theta,t}$ yields
\[
\frac{\partial}{\partial s}\mathcal L(s)
=
s\sum_{t=1}^T
\left(
\tr \big(\bm \Omega_{\theta,t}(s)^{-1}\bm A_{\theta,t}\big)
-
\tr \big(\bm \Omega_{\theta,t}(s)^{-1}\bm A_{\theta,t}\bm \Omega_{\theta,t}(s)^{-1}\bm \Omega_t^\star\big)
\right).
\]

Define
\[
\bm M_t(s):=\bm \Omega_{\theta,t}(s)^{-1/2}\bm A_{\theta,t}\bm \Omega_{\theta,t}(s)^{-1/2}\succeq 0,
\qquad
\bm Z_t(s):=\bm \Omega_{\theta,t}(s)^{-1/2}\bm \Omega_t^\star\bm \Omega_{\theta,t}(s)^{-1/2}.
\]
Using cyclicity of the trace,
\[
\tr \big(\bm \Omega_{\theta,t}^{-1}\bm A_{\theta,t}\big)=\tr \big(\bm M_t\big),
\qquad
\tr \big(\bm \Omega_{\theta,t}^{-1}\bm A_{\theta,t}\bm \Omega_{\theta,t}^{-1}\bm \Omega_t^\star\big)=\tr \big(\bm M_t\bm Z_t\big),
\]
so
\[
\frac{\partial}{\partial s}\mathcal L(s)
=
s\sum_{t=1}^T \tr \big(\bm M_t(s)(\bm I-\bm Z_t(s))\big).
\]

Assume now that $0<s\le s_0$. Since $\bm \Omega_{\theta,t}(s)=s^2\bm A_{\theta,t}+\bm C_{\theta_s}$ is monotone increasing in $s$,
\[
\bm \Omega_{\theta,t}(s)\preceq \bm \Omega_{\theta,t}(s_0)\preceq \bm \Omega_t^\star
\qquad \forall t.
\]
Therefore
\[
\bm Z_t(s)=\bm \Omega_{\theta,t}(s)^{-1/2}\bm \Omega_t^\star\bm \Omega_{\theta,t}(s)^{-1/2}\succeq \bm I,
\]
so $\bm I-\bm Z_t(s)\preceq 0$.
Because $\bm M_t(s)\succeq 0$, each summand is non-positive. Indeed, writing $\bm I-\bm Z_t=-\bm N_t$ with $\bm N_t\succeq 0$,
\[
\tr \big(\bm M_t(\bm I-\bm Z_t)\big)=-\tr \big(\bm M_t\bm N_t\big)\le 0,
\]
since $\tr(\bm A\bm C)\ge 0$ for positive semidefinite $\bm A,\bm C$.

For index $t_\star$, the assumption $\bm \Omega_{\theta,t_\star}(s_0)\prec \bm \Omega_{t_\star}^\star$ implies
\[
\bm \Omega_{\theta,t_\star}(s)\preceq \bm \Omega_{\theta,t_\star}(s_0)\prec \bm \Omega_{t_\star}^\star,
\]
hence $\bm Z_{t_\star}(s)\succ \bm I$.
Since $\bm A_{\theta,t_\star}\neq 0$, we also have $\bm M_{t_\star}(s)\neq 0$.
Therefore
\[
\tr \big(\bm M_{t_\star}(s)(\bm I-\bm Z_{t_\star}(s))\big)<0.
\]
Summing over $t$ yields
\[
\frac{\partial}{\partial s}\mathcal L(s)<0
\qquad \text{for all } 0<s\le s_0.
\]
Hence no local minimizer can lie in $(0,s_0]$.
\end{proof}

\subsection{Kalman updates as adaptive anisotropic forcing}
\label{app:kaffee_gtf_appendix}

Isolating the optimization mechanism by which filtering alters chaotic Jacobian product chains connects the KAFFEE objective to recent literature on exploding BPTT gradients and generalized teacher forcing (GTF) \citep{mikhaeil2022difficulty,hess2023generalized}. Recent work in this direction extended GTF to project the network orthogonally toward the zero-error manifold \citep{sagtekin2025error}, and interpreted identity teacher forcing as an intervention-based generalized-Bayes objective \citep{herz2026teacher}. While these methods stabilize BPTT-gradients geometrically via the pseudoinverse and a forcing-strength hyperparameter, KAFFEE instead induces state/data-adaptive error forcing through Bayesian filtering updates, without introducing a separate forcing-strength hyperparameter. This parallels the filtering/proposal perspective in \citep{pals2024inferring}, where uncertainty-adaptive interpolation between model-predicted and data-inferred states is related to GTF.

\begin{proposition}[KAFFEE induces an anisotropic filtering factor]
\label{prop:kaffee_gtf_factorization}
Let $\bm \mu:=F_{\theta_c}(\bm z_{t-1\mid t-1})=\bm z_{t\mid t-1}$, and write
\[
\bm z_{t\mid t}
=
\bm \mu+\bm K_t(\bm z_{t-1\mid t-1})(\bm x_t-\bm B\bm \mu).
\]
Let $\bm J_t=\partial F_{\theta_c}/\partial \bm z_{t-1\mid t-1}$ and let
$\bm k_{t,j}(\bm z_{t-1\mid t-1})$ denote the $j$th column of $\bm K_t(\bm z_{t-1\mid t-1})$.
Then
\[
\frac{\partial \bm z_{t\mid t}}{\partial \bm z_{t-1\mid t-1}}
=
(\bm I-\bm K_t\bm B)\bm J_t
+
\sum_{j=1}^N
(\bm x_t-\bm B\bm z_{t\mid t-1})_j
\frac{\partial \bm k_{t,j}}{\partial \bm z_{t-1\mid t-1}}.
\]
If, in addition, the Kalman gain is treated as a differentiable function of the
predicted mean, $\bm K_t=\bm K_t(\bm z_{t\mid t-1})$, then the mean-to-mean Jacobian of the filtered step satisfies
\[
\frac{\partial \bm z_{t\mid t}}{\partial \bm z_{t-1\mid t-1}}
=
(\bm I-\bm K_t\bm B+\bm R_t)\bm J_t,
\]
with
\[
\bm R_t=\sum_{j=1}^N
(\bm x_t-\bm B\bm z_{t\mid t-1})_j
\frac{\partial \bm k_{t,j}}{\partial \bm \mu}(\bm z_{t\mid t-1}),\]
so the mean-state component of the BPTT chain contains an observation-dependent anisotropic filtering factor. 
\end{proposition}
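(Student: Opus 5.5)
The argument is a direct application of the chain rule to the update map
\[
\bm z_{t\mid t}=\bm\mu+\bm K_t\,(\bm x_t-\bm B\bm\mu),\qquad \bm\mu=F_{\theta_c}(\bm z_{t-1\mid t-1}),
\]
with the data $\bm x_t$ held fixed. The first step is to expand the product $\bm K_t(\bm x_t-\bm B\bm\mu)$ column by column:
\[
\bm K_t(\bm x_t-\bm B\bm\mu)=\sum_{j=1}^N(\bm x_t-\bm B\bm\mu)_j\,\bm k_{t,j}.
\]
This writes it as a sum of scalar-times-vector functions, and the product rule applies to each term. Differentiating the sum with respect to $\bm z_{t-1\mid t-1}$ produces two groups of terms:
\begin{itemize}
\item the terms where the derivative hits the innovation, which give $-\bm K_t\bm B\,\partial\bm\mu/\partial\bm z_{t-1\mid t-1}=-\bm K_t\bm B\bm J_t$;
\item the terms where the derivative hits the gain columns, which give $\sum_j(\bm x_t-\bm B\bm z_{t\mid t-1})_j\,\partial\bm k_{t,j}/\partial\bm z_{t-1\mid t-1}$.
\end{itemize}
Adding the derivative of the leading $\bm\mu$, namely $\bm J_t$, yields the first displayed identity.

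The gain can depend on $\bm z_{t-1\mid t-1}$ through the Jacobian used in the covariance prediction (Eq.~\ref{eq:ekf_predict_cov}), not only through $\bm\mu$. For the first identity I would therefore keep $\partial\bm k_{t,j}/\partial\bm z_{t-1\mid t-1}$ as a total derivative and make no structural assumption about it. It is well defined wherever $F_{\theta_c}$ and the covariance map are differentiable; for the AL-RNN this means away from region boundaries, or under the STE convention.

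For the second identity, the additional hypothesis is that $\bm K_t$ depends on $\bm z_{t-1\mid t-1}$ only through $\bm z_{t\mid t-1}=\bm\mu$. A second chain-rule step then gives
\[
\frac{\partial\bm k_{t,j}}{\partial\bm z_{t-1\mid t-1}}=\frac{\partial\bm k_{t,j}}{\partial\bm\mu}(\bm z_{t\mid t-1})\,\bm J_t .
\]
Substituting this into the sum and factoring $\bm J_t$ on the right of every term gives $(\bm I-\bm K_t\bm B+\bm R_t)\bm J_t$ with $\bm R_t$ exactly as defined. The right-factorization works because all three pieces enter linearly in $\partial\bm\mu/\partial\bm z_{t-1\mid t-1}$.

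The main point needing care is this bookkeeping of dependencies. In the first identity, the Kalman-gain derivative must be the total derivative with respect to $\bm z_{t-1\mid t-1}$; in the second, it must be the derivative through $\bm\mu$ alone. Conflating the two would either double-count or drop terms. I would also state explicitly that $\bm\Sigma_{t-1\mid t-1}$ and the data are treated as fixed for this mean-to-mean Jacobian, since they are not functions of $\bm z_{t-1\mid t-1}$ within a single step.

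The concluding interpretation follows by multiplying these one-step Jacobians along the BPTT chain. Each factor then contains the observation-dependent matrix $\bm I-\bm K_t\bm B+\bm R_t$ in place of the identity. The leading part $\bm I-\bm K_t\bm B$ is the same factor that contracts the covariance in Proposition~\ref{prop:volume_pressure_localization}.
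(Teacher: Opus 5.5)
Your proposal is correct and follows the paper's proof essentially step for step. Both expand $\bm K_t(\bm x_t-\bm B\bm\mu)$ column-wise, apply the product and chain rules to obtain the first identity, and then substitute $\partial\bm k_{t,j}/\partial\bm z_{t-1\mid t-1}=(\partial\bm k_{t,j}/\partial\bm\mu)\,\bm J_t$ to factor out $\bm J_t$ and iterate along the BPTT chain. You also point out that the EKF gain generally depends on $\bm z_{t-1\mid t-1}$ through $\bm J_t$ as well, so the second identity genuinely needs its extra hypothesis; the paper leaves this implicit, and your remark clarifies it.
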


\begin{proof}[Proof of Proposition~\ref{prop:kaffee_gtf_factorization}]
Let
\[
\bm u:=\bm z_{t-1\mid t-1},\qquad
\bm\mu:=F_{\theta_c}(\bm u)=\bm z_{t\mid t-1},
\qquad
\bm r_t:=\bm x_t-\bm B\bm\mu .
\]
Write  $\bm k_{t,j}(\bm u) $ for the  $j $-th column of
 $\bm K_t(\bm u) $. Then
\[
\bm z_{t\mid t}
=
\bm\mu+\bm K_t(\bm u)\bm r_t
=
F_{\theta_c}(\bm u)
+
\sum_{j=1}^N r_{t,j}(\bm u)\,\bm k_{t,j}(\bm u).
\]
For any perturbation  $\delta\bm u $, the directional derivative is
\[
D\bm z_{t\mid t}[\delta\bm u]
=
\bm J_t\delta\bm u
-
\bm K_t\bm B\bm J_t\delta\bm u
+
\sum_{j=1}^N
r_{t,j}
\frac{\partial \bm k_{t,j}}{\partial \bm z_{t-1\mid t-1}}
\delta\bm u,
\]
where
\[
\bm J_t:=\frac{\partial F_{\theta_c}}{\partial \bm z_{t-1\mid t-1}} .
\]
Therefore
\[
\frac{\partial \bm z_{t\mid t}}{\partial \bm z_{t-1\mid t-1}}
=
(\bm I-\bm K_t\bm B)\bm J_t
+
\sum_{j=1}^N
(\bm x_t-\bm B\bm z_{t\mid t-1})_j
\frac{\partial \bm k_{t,j}}{\partial \bm z_{t-1\mid t-1}}.
\]

If, in addition,  $\bm K_t $ is treated as a differentiable function of the
predicted mean  $\bm z_{t\mid t-1} $, then
\[
\frac{\partial \bm k_{t,j}}{\partial \bm z_{t-1\mid t-1}}
=
\frac{\partial \bm k_{t,j}}{\partial \bm \mu}(\bm z_{t\mid t-1})\bm J_t.
\]
Substituting this into the previous display yields
\[
\frac{\partial \bm z_{t\mid t}}{\partial \bm z_{t-1\mid t-1}}
=
(\bm I-\bm K_t\bm B+\bm R_t)\bm J_t,
\]
with
\[
\bm R_t
=
\sum_{j=1}^N
(\bm x_t-\bm B\bm z_{t\mid t-1})_j
\frac{\partial \bm k_{t,j}}{\partial \bm \mu}(\bm z_{t\mid t-1}).
\]

Thus iterating the chain rule gives, for every $1\le r<t\le T$,
\[
\frac{\partial \bm z_{t\mid t}}{\partial \bm z_{r\mid r}}
=
\prod_{k=r+1}^{t}
\frac{\partial \bm z_{k\mid k}}{\partial \bm z_{k-1\mid k-1}}
=
\prod_{k=r+1}^{t}
\big(\bm I-\bm K_k\bm B+\bm R_k\big)\bm J_k .
\]
This proves the factorization.

\end{proof}
Proposition~\ref{prop:kaffee_gtf_factorization} makes the connection to GTF explicit: in the (simplified) scalar state-independent-gain case $\bm B=\bm I$, $\bm K_t=\alpha\bm I$, and $\bm R_t=0$, the filtered multiplier becomes $(1-\alpha)\bm J_t$, recovering the GTF damping factor $\alpha$ of \citet{hess2023generalized}. Under a sufficiently strong Kalman gain, the factor
$(\bm I-\bm K_t\bm B+\bm R_t)\bm J_t$ can be substantially less expansive than $\bm J_t$ and can even have a dominant singular value smaller than one even when $\bm J_t$ is expansive, thus attenuating the exponential explosion of gradients that would otherwise occur in chaotic regimes \citep{mikhaeil2022difficulty}. However, unlike GTF, the damping factor is not a free parameter; it is a function of the surrogate's local Jacobian and the current predictive covariance, so it cannot be tuned independently of the surrogate's dynamical fidelity. 

\subsection{Empirical density representations and particle methods}
\label{app:empirical_density}

The main-text contraction argument specifically targets \emph{fixed-form} density projections, such as Gaussian predictive laws with an explicit covariance volume penalty. Sample-based methods such as Variational Sequential Monte Carlo \citep{naesseth2018variational,le2018autoencoding,maddison2017filtering,pals2024inferring} can avoid that specific parametric bottleneck because they represent posterior mass empirically rather than through one global covariance ellipsoid. However, in high-dimensional chaotic settings this relief may come with particle degeneracy and the need for enough particles to cover a highly curved posterior support. So these methods sidestep one version of the DPC gap mechanism while paying a separate scalability cost.

\subsection{Computational cost and scaling}
\label{app:kaffee_scaling_appendix}
\label{app:compute_resources}

At a high level, KAFFEE inherits the scaling profile of EKF-style Gaussian filtering: it propagates one mean/covariance pair rather than a particle cloud or a large forecast ensemble. In the implementations, the 
\emph{learned} shell covariances are diagonal, that is, $\bm \Gamma$ and $\bm \Lambda$ are parameterized by only $O(M+N)$ log-standard-deviation parameters rather than full $M\times M$ and $N\times N$ matrices. However, after those diagonal shells initialize the filter, the predictive and filtered state covariances produced by Jacobian-based covariance transport are propagated as full dense matrices in latent space. Consequently, the dominant forward cost is the dense EKF recursion, not the shell parameterization itself: with latent dimension $M$ and observation dimension $N$, the forward step is dominated by the Jacobian transport and Kalman update, with cost on the order of $O(M^3 + M^2N + N^3)$ per step when full latent covariance is retained. Note that for diagonal observation covariances, we can completely avoid the $O(N^3)$ inversion of the innovation covariance, via the Woodbury inversion lemma (at an additional $O(M^3)$ cost), which is practical for settings with high-D observations \citep{rasmussen2006gaussian}. For a batched training window, the cost scales linearly in the batch size and loss horizon.

The gain-variation term $\bm R_t$ from Proposition~\ref{prop:kaffee_gtf_factorization} matters mainly for the exact backward path rather than the forward EKF pass. In our implementation the differentiable EKF is optimized end-to-end, and automatic differentiation captures the dependence of $\bm R_t$ by differentiating through the Kalman gain and the Jacobian-based covariance transport. In the applied straight-through-estimator (STE) variant, the forward mean remains ReLU-gated while the filter linearization in the backward pass uses the temperature-smoothed surrogate Jacobian described in Appendix~\ref{app:alrnn_implementation_details}. Because $\bm K_t$ depends on the dense predictive covariance and that covariance depends on $\bm J_t(\bm \mu)$, the exact backward pass inherits second-order sensitivities of the transition map associated with $\bm R_t$, effectively requiring Hessian-vector product information (even when no dense Hessian tensor is formed explicitly). 
 
The practical high-dimensional bottleneck in the present paper implementation is instead the exact dense training path, which combines full $M\times M$ covariance transport, dense Jacobian evaluation, and the implicit higher-order backward terms induced by the gain dependence. In this sense, KAFFEE remains attractive in moderately high dimensions, but much larger latent states would likely require structured approximations such as diagonal-, block-, or low-rank covariance transport \cite{chang2022diagonal,chang2023lowrank,schmidt2023rankreduced} and approximate Jacobian/backward passes. Such reductions are further motivated by results in chaotic data assimilation: in ``perfect surrogate model'' scenarios, Kalman error covariances concentrate on the unstable-neutral subspace \citep{trevisan2011kalman,carrassi2022data}. With additive model/process noise, however, covariance in stable directions can be replenished by the noise and remain bounded instead of vanishing; this motivates low-rank-plus-residual approximations rather than purely unstable-subspace approximations \citep{grudzien2018chaotic}.

\textbf{Compute resources.} In practice, the dense EKF training path is cheap on the benchmarks presented in this work: On the 20-dimensional stochastic Lorenz-96 ablation, the median per-seed train time on a single Nvidia TITAN Xp GPU is $75\mathrm{s}$ for KAFFEE and $62$--$68\mathrm{s}$ for the matched open-loop Gaussian NLL controls (no-transport / tied transport), all under the same $150$-iteration schedule (Section~\ref{app:alrnn_probabilistic_baselines}). The DynaMix in-context adaptation experiments are also modest: for the 10-iteration adaptation budget reported in the main text and tables, a full sweep per (system, seed) pair has a median wall-clock time of $\approx 12\mathrm{min}$ on a single TITAN Xp GPU. While the DynaMix core is a pretrained foundation model, the Lorenz-96 experiments used deterministic AL-RNN cores. Training these cores takes approximately $75$ minutes per seed on a single TITAN Xp GPU; Table~\ref{tab:main_l96_ablation} uses $20$ independently pretrained cores.

\section{KAFFEE algorithm and filtering equations}
\label{app:kaffee_algorithm}

\subsection{EKF Predict-update equations}
\label{app:ekf_equations}

For completeness, we provide the full recursive equations for the EKF utilized in the KAFFEE architecture. Let $F_{\theta_c}$ denote the autonomous transition map parameterized by the deterministic core, and let $\bm{\Gamma}_{\theta_s}$ and $\bm{\Lambda}_{\theta_s}$ denote the learned diagonal process and observation noise covariances parameterizing the stochastic shell. We assume a linear observation model $\bm{B}$.

Given a filtered state belief $\mathcal{N}(\bm{z}_{t-1\mid t-1}, \bm{\Sigma}_{t-1\mid t-1})$ at time $t-1$, the forward pass of KAFFEE performs the EKF's predict-update cycle at time $t$ \citep{sarkka2023bayesian}:

\textbf{1. Predict step (autonomous expansion).}
We first obtain the predictive density:
\begin{align*}
    p(\bm z _t\mid \bm x_{1:t-1})=\int p(\bm z_t \mid \bm z_{t-1})p(\bm z _{t-1}\mid \bm x_{1:t-1}) d\bm z_{t-1}.
\end{align*}
We apply the dynamics $p(\bm z_t \mid \bm z_{t-1})$ and marginalize the previous latent $\bm z_{t-1}$. While the integral is intractable for nonlinear dynamics, the EKF proceeds by linearizing around $\bm{z}_{t-1 \mid t-1}$. This results in a Gaussian approximation to $p(\bm z _t\mid \bm x_{1:t-1})$, where
the deterministic core projects the state mean forward, while the local Jacobian $\bm{J}_{\theta_c, t} = \nabla_{\bm{z}} F_{\theta_c}(\bm{z}_{t-1 \mid t-1})$ transports the covariance:
\begin{align}
\bm{z}_{t\mid t-1} &= F_{\theta_c}(\bm{z}_{t-1\mid t-1}) \label{eq:kf_predict_mean}, \\
\bm{\Sigma}_{t\mid t-1} &= \bm{J}_{\theta_c, t} \, \bm{\Sigma}_{t-1\mid t-1} \, \bm{J}_{\theta_c, t}^\top + \bm{\Gamma}_{\theta_s} \label{eq:kf_predict_cov}.
\end{align}

\textbf{2. Update step (data assimilation).}
Next, we correct the state and covariance using the observations \citep{reich2015probabilistic,carrassi2018data}. This step (typically) contracts the predictive variance (Proposition~\ref{prop:volume_pressure_localization}).

Using $p(\bm x_t\mid\bm z_t, \bm x_{1:t-1})=p(\bm x_t\mid \bm z_t)$ and $p(\bm z_t\mid \bm x_{1:t-1})$ obtained in the last step, we have the following expression for the joint predictive distribution $p(\bm x_t,\bm z_t \mid \bm x_{1:t-1})$:
\begin{align*}
\begin{bmatrix}
\bm z_t \\
\bm x_t
\end{bmatrix}
\sim \mathcal{N}
\left(
\begin{bmatrix}
\bm z_{t|t-1} \\
\bm B\bm z_{t|t-1}
\end{bmatrix},
\begin{bmatrix}
\bm \Sigma_{t|t-1} & \bm \Sigma_{t|t-1} \bm B^\top \\
\bm B \bm \Sigma_{t|t-1} & \bm B \bm \Sigma_{t|t-1} \bm B^\top + \bm{\Lambda}_{\theta_s}
\end{bmatrix}
\right),
\end{align*}
where we can read off the innovation covariance
\begin{align}
\bm{\Omega}_t &= \bm{B} \bm{\Sigma}_{t\mid t-1} \bm{B}^\top + \bm{\Lambda}_{\theta_s}. \label{eq:kf_innovation_cov} 
\end{align}
Given this joint distribution, we can apply standard Gaussian conditioning identities to obtain the filtered distribution $p(\bm z_t \mid \bm x_{1:t})$, with mean and covariance given by:
\begin{align}
\bm{z}_{t\mid t} &= \bm{z}_{t\mid t-1} + \bm{K}_t (\bm{x}_t - \bm{B} \bm{z}_{t\mid t-1}), \label{eq:kf_update_mean} \\
\bm{\Sigma}_{t\mid t} &= (\bm{I} - \bm{K}_t \bm{B}) \bm{\Sigma}_{t\mid t-1}. \label{eq:kf_update_cov}
\end{align}
Where $\bm{K}_t$, is the Kalman Gain, which determines the optimal blending weight between the prediction and the incoming noisy observation $\bm{x}_t$:
\begin{align}
\bm{K}_t &= \bm{\Sigma}_{t\mid t-1} \bm{B}^\top \bm{\Omega}_t^{-1} .\label{eq:kf_gain}
\end{align}

Equivalently, for numerical stability, one may use the symmetric Joseph form for the covariance:
\[
\bm{\Sigma}_{t\mid t}=(\bm I-\bm K_t\bm B)\bm\Sigma_{t\mid t-1}(\bm I-\bm K_t\bm B)^\top+\bm K_t\bm\Lambda_{\theta_s}\bm K_t^\top.
\]

During training, the innovation $(\bm{x}_t - \bm{B} \bm{z}_{t\mid t-1})$ and their covariance $\bm{\Omega}_t$ are passed directly to the innovation NLL objective (Eq. \ref{eq:filtered_likelihood}) to compute the (localized) loss before proceeding to $t+1$.

Interleaving prediction and assimilation prevents the innovation objective from scoring a single long open-loop covariance product. Under standard linear-Gaussian regularity conditions, this recursion approaches a stable equilibrium \citep{reif1999stochastic}. In the nonlinear setting, we use the same local mechanism as a stabilizing inductive bias, see also Appx.~\ref{prop:volume_pressure_localization}.

\subsection{KAFFEE optimization details}
\label{app:kaffee_opt_details}
We now proceed to describe additional training details of KAFFEE. The differentiable EKF described in Appendix~\ref{app:ekf_equations} is optimized end-to-end. Algorithm~\ref{alg:kaffee_opt} then states the generic KAFFEE optimization loop: each step samples one or more truncated windows, propagates the predict--transport--update recursion through every step, accumulates the innovation NLL, and backpropagates jointly through the core parameters $\theta$ and the shell parameters $(\bm \Gamma,\bm \Lambda)$.

\textbf{Straight-through estimator (STE).} The exact one-step Jacobian of the AL-RNN is (cf. Eq.~\ref{eq:alrnn})
\[
\frac{\partial F_{\bm\theta}}{\partial \bm z}(\bm z)
=
\bm A+\bm W\bm D(\bm c(\bm z)).
\]
While the exact hard transition is used for the forward-mean in KAFFEE (and for all shell-off deterministic metrics), we employ an STE by using sigmoid-smoothed surrogates for both backpropagation and the EKF covariance transport \citep{bengio2013estimating}. Specifically, we replace each of the $P$ gated coordinates by $z_i \mapsto \sigma(1.7 z_i/\tau)z_i$. This specific scaling factor of $1.7$ is chosen because it forms a smooth approximation of the ReLU, equivalent to the Gaussian error linear unit (GeLU) activation \citep{hendrycks2023gaussian}. For all models, we set the straight-through temperature to $\tau = 1$ without further tuning, as our experiments indicated the framework is robust to this hyperparameter.

\textbf{Shell initialization and optimization details.}
KAFFEE jointly optimizes the AL-RNN core and diagonal shell from the AL-RNN checkpoint with the initialization embedding $\bm E$ fixed. The diagonal noise shell is initialized from the same-seed AL-RNN's teacher-forcing residual scale (in standardized observation coordinates). Writing $\hat{\bm r}_0\in\mathbb{R}_+^N$ for the coordinate-wise standard deviation of the teacher-forcing residuals on the noisy training observations, we use a fixed $90/10$ variance split ($\alpha = 0.9$ in Algorithm~\ref{alg:kaffee_opt})
\[
\bm \Lambda_0 = \operatorname{diag} \big((\sqrt{0.9}\,\hat{\bm r}_0)^2\big),
\qquad
\bm \Gamma_0 = \bar q_0^2\bm I_M,
\qquad
\bar q_0:=\frac{1}{N}\sum_{j=1}^N \sqrt{0.1}\,\hat r_{0,j},
\]
At each gradient step we sample $16$ contiguous windows of total length $40$ from the standardized noisy training trajectory, propagate the EKF through the full window, and include only the final $32$ innovation terms in the loss after an $8$-step burn-in. Optimization uses Adam ($\beta_1 = 0.9$, $\beta_2 = 0.999$) \citep{kingma2014adam} with learning rate $5\times10^{-3}$, gradient clipping $2.0$ and $150$ joint iterations. This configuration was the winning configuration under observable $D_{\mathrm{stsp}}^{\mathrm{stoch}}$, evaluated against the noisy training data and chosen via grid search using one of the $20$ seeds. 

\textbf{Windowed truncation and covariance burn-in.}
\label{app:windowed_truncation_covburnin}
During training, KAFFEE is optimized over contiguous truncated windows rather than the full, uninterrupted trajectory. Consequently, at the start of each window, the filter mean is anchored to the observed state and the predictive covariance is reinitialized from the static shell parameters. If those first few innovation terms enter the loss immediately, the optimizer can partly improve the objective by exploiting transient covariance geometry that exists only because the window was truncated there. The goal is to couple predictive uncertainty to the learned transition map through Jacobian-transported covariance, so the loss-contributing portion of each window should reflect covariance transport that has already had a short opportunity to align with the local dynamics.

For that reason, KAFFEE runs use a short covariance burn-in. We sample windows of total length $L_{\mathrm{burn}}+L$ with burn-in phase $L_{\mathrm{burn}}=8$ and loss horizon $L=32$, propagate the filter recursion across all $40$ steps, but exclude the first $8$ innovation terms from the loss. Writing the per-step innovation contribution as
\[
\ell_t = \frac12\Big[ \log\det \bm \Omega_t + \bm v_t^\top \bm \Omega_t^{-1}\bm v_t + N\log(2\pi)\Big],
\]
the loss on one sampled window becomes
\[
\mathcal L_{\mathrm{EKF}}^{\mathrm{burn}}=\frac{1}{LN}\sum_{t=L_{\mathrm{burn}}+1}^{L_{\mathrm{burn}}+L} \ell_t,
\qquad L_{\mathrm{burn}}=8,\quad L=32.
\]
The filter state and predictive covariance are still propagated through the full $40$-step window; only the burn-in phase is ``hidden'' from the objective. This modification thus keeps the scored loss horizon and batch construction unchanged while removing the optimizer's incentive to fit a covariance transient that is caused by window truncation rather than by the learned dynamics. To match schedules, the open-loop NLL control mirrors the same sampled window length. We choose $L_{\mathrm{burn}}$ only to remove the short window-start covariance transient; in practice it should be long enough that the predictive covariance is no longer dominated by its initialization and kept fixed across baselines.

\begin{algorithm}[!htpb]
\caption{KAFFEE optimization protocol}
\label{alg:kaffee_opt}
\begin{algorithmic}[1]
\STATE \textbf{Input:} observations $\bm x_{1:T}$, transition map $F_\theta$, observation matrix $\bm B$, deterministic prior model $\theta_c^{(0)}$, covariance burn-in length $L_{\mathrm{burn}}$, scored length $L$, variance split $\alpha\in(0,1)$ (here, $\alpha = 0.9$ throughout).
\STATE \textbf{Initialization.} Compute one-step teacher-forcing residual scales $\hat{\bm r}_0\in\mathbb R_+^N$ on noisy training trajectory under $\theta_c^{(0)}$. Set
\[
\bm \Lambda_0
=
\operatorname{diag} \big(\alpha\,\hat{\bm r}_0^{\,2}\big),
\qquad
\bm \Gamma_0
=
\bar q_0^2\bm I_M,
\qquad
\bar q_0
=
\frac{1}{N}
\sum_{j=1}^N
\sqrt{1-\alpha}\,\hat r_{0,j}.
\]
Set $\theta_c\leftarrow\theta_c^{(0)}$, $\theta_s \leftarrow(\bm\Gamma_0,\bm\Lambda_0)$, and initialize $\theta \leftarrow (\theta_c,\theta_s)$.
\FOR{each optimization step}
    \STATE Sample a mini-batch $\mathcal B$ of windows
    $\bm x_{t_0:t_0+L_{\mathrm{burn}}+L}$.
    \STATE Form the current diagonal shell covariances $\bm\Gamma$ and $\bm\Lambda$.
    \STATE Set $\mathcal L_{\mathrm{batch}}\leftarrow 0$.
    \FOR{each window in $\mathcal B$}
    \STATE \textbf{Initialize filtered belief:} 
    Set $\bar{\bm z}_{t_0}$ to the latent state from the deterministic prior at $t_0$. Initialize the filter by applying the identity teacher forcing projection (Appx.~\ref{app:alrnn_implementation_details}):
        \[
        \bm z_{t_0|t_0} = \bar{\bm z}_{t_0} + \bm B^\top(\bm x_{t_0} - \bm B\bar{\bm z}_{t_0}),
        \qquad
        \bm\Sigma_{t_0|t_0} = \bm\Gamma.
        \]
        \FOR{$t=t_0+1,\ldots,t_0+L_{\mathrm{burn}}+L$}
            \STATE \textbf{Predict:}
            \[
            \bm z_{t\mid t-1}=F_\theta(\bm z_{t-1\mid t-1}),
            \qquad
            \bm J_t
            =
            \frac{\partial F_\theta}{\partial \bm z}(\bm z_{t-1\mid t-1}) .
            \]
            \STATE \textbf{Transport covariance:}
            \[
            \bm\Sigma_{t\mid t-1}
            =
            \bm J_t\bm\Sigma_{t-1\mid t-1}\bm J_t^\top
            +
            \bm\Gamma .
            \]
            \STATE \textbf{Form innovation:}
            \[
            \bm v_t
            =
            \bm x_t-\bm B\bm z_{t\mid t-1},
            \qquad
            \bm\Omega_t
            =
            \bm B\bm\Sigma_{t\mid t-1}\bm B^\top+\bm\Lambda .
            \]
            \IF{$t>t_0+L_{\mathrm{burn}}$}
                \STATE Accumulate innovation NLL
                \[
                \mathcal L_{\mathrm{batch}}
                \leftarrow
                \mathcal L_{\mathrm{batch}}
                +
                \frac{1}{2|\mathcal B|LN}
                \left[
                \log\det\bm\Omega_t
                +
                \bm v_t^\top\bm\Omega_t^{-1}\bm v_t
                +
                N\log(2\pi)
                \right].
                \]
            \ENDIF
            \STATE \textbf{Update:}
            \[
            \bm K_t
            =
            \bm\Sigma_{t\mid t-1}\bm B^\top\bm\Omega_t^{-1},
            \]
            \[
            \bm z_{t\mid t}
            =
            \bm z_{t\mid t-1}+\bm K_t\bm v_t,
            \qquad
            \bm\Sigma_{t\mid t}
            =
            (\bm I-\bm K_t\bm B)\bm\Sigma_{t\mid t-1}.
            \]
        \ENDFOR
    \ENDFOR
    \STATE Backpropagate $\mathcal L_{\mathrm{batch}}$ through the predict--transport--update computation and update $\theta$.
\ENDFOR
\end{algorithmic}
\end{algorithm}

\section{Experimental Details}
\label{app:experimental_details}

This section consolidates model, dataset, metric, and optimization protocol details required to reproduce Tables~\ref{tab:main_l96_ablation} and~\ref{tab:dynamix_results_table}.

\subsection{AL-RNN architecture and deterministic training}
\label{app:alrnn_implementation_details}

All rows in Table~\ref{tab:main_l96_ablation} share the same AL-RNN core and deterministic pretraining protocol. Its deterministic transition for $\bm z_t\in\mathbb R^M$ is
\begin{equation}
\label{eq:alrnn}
\bm z_{t+1}
=
F_{\bm\theta}(\bm z_t)
=
\bm A\bm z_t + \bm W\,\bm\phi^\ast(\bm z_t) + \bm h,
\end{equation}
where $\bm A \in \mathbb{R}^{M\times M}$, $\bm W \in \mathbb{R}^{M\times M}$ are diagonal and full matrices, respectively. $\bm\phi^\ast$ applies a ReLU gating to the last $P$ coordinates only, and the first $M-P$ coordinates are linear \citep{brenner2024almostlinear}. Equivalently, writing the binary switching code $\bm c_t\in\{0,1\}^P$ and diagonal gate matrix
\[
\bm D(\bm c_t)
=
\mathrm{diag}\big(\underbrace{1,\dots,1}_{M-P},c_{t,1},\dots,c_{t,P}\big),
\]
the transition can be written as
\begin{equation}
\label{app:alrnn-transition}
\bm z_{t+1}=
\big(\bm A+\bm W\bm D(\bm c_t)\big)\bm z_t + \bm h.
\end{equation}
We use latent dimension $M=100$, $N=20$, and $P=50$ ReLU-gates. Observations are always the first $N$ latent coordinates, so the observation operator is fixed to
\begin{equation}
\label{app:observation_operator}
\bm B = [\bm I_N\ \bm 0] \in \mathbb R^{N\times M},
\qquad
\bm x_t = \bm B\bm z_t.
\end{equation}
To initialize a rollout from the first observation $\bm x_1\in\mathbb R^N$, a learned linear embedding $\bm E\in\mathbb R^{M\times N}$ is applied to overwrite the observed coordinates \citep{brenner2024almostlinear},
\[
\bm z_1 \leftarrow \bm E\bm x_1,
\qquad
\bm z_1 \leftarrow \bm z_1 + \bm B^{\top}(\bm x_1-\bm B\bm z_1).
\]

\textbf{Deterministic prior pretraining.}
Each deterministic AL-RNN core is pretrained on the standardized noisy Lorenz-96 training observations described in Appx.~\ref{app:lorenz_datasets}. Training uses BPTT with sparse identity teacher forcing and forcing interval $\tau=20$ \citep{mikhaeil2022difficulty,brenner2024almostlinear}. Because the observation map is the identity projection onto the first $N$ coordinates, this is identity teacher forcing (ITF): at forcing times $\mathcal T_\tau=\{t:t\equiv 0\ (\mathrm{mod}\ \tau),\ t>0\}$ we apply
\[
\tilde{\bm z}_t
=
\bm z_t + \bm B^{\top}(\bm x_t-\bm B\bm z_t),
\]
which overwrites the observed coordinates and leaves the remaining coordinates untouched. Between forcing times the model is free-running. Training minimizes next-step MSE on the observed coordinates along these interleaved rollouts,
\[
\mathcal L_{\mathrm{ITF}}
=
\frac{1}{L_{\mathrm{BPTT}}N}
\sum_{t=1}^{L_{\mathrm{BPTT}}}
\lVert \bm B\bm z_{t+1}-\bm x_{t+1}\rVert_2^2,
\qquad
L_{\mathrm{BPTT}}=50,
\]
using mini-batches of $16$ contiguous windows, $50$ batches per epoch, and $2\cdot 10^3$ epochs on the $T_{\mathrm{train}}=8 \cdot 10^4$ time steps long training trajectory. Optimization uses RAdam with an exponential learning-rate decay from $10^{-3}$ to $10^{-5}$ \citep{liu2020variance}. Initialization follows \citep{brenner2024almostlinear}: $\bm h=\bm 0$, the entries of $\bm W$ are i.i.d. $\mathcal N(0,0.01)$, the diagonal of $\bm A$ is taken from a normalized random positive-definite matrix, and the embedding $\bm E$ is initialized entrywise from $\mathcal U(-1/\sqrt{N},1/\sqrt{N})$. Table~\ref{tab:main_l96_ablation} reports $20$ independent AL-RNNs trained with this same protocol. Every probabilistic AL-RNN row in this table inherits the corresponding deterministic core and initializer $\bm E$.

\subsection{Probabilistic baselines}
\label{app:alrnn_probabilistic_baselines}

For Lorenz-96, the matched controls use the same AL-RNN initializations and the same diagonal noise parametrization and initialization as KAFFEE. For BPTT optimization, KAFFEE and all controls use Adam with learning rate $5\times10^{-3}$, gradient clipping $2.0$, and $150$ joint iterations. Each sampled window has 40 total steps: 8-steps are propagated for covariance burn-in and the final 32 steps are scored. 

\textbf{Open-loop objectives.}
The two open-loop baselines in Table~\ref{tab:main_l96_ablation} are trained using a open-loop Gaussian NLL. At each window time $t$, we compute the predictive mean $\bm z_{t\mid t-1}=F_{\theta_c}(\bm z_{t-1\mid t-2})$ and
covariance $\bm\Sigma_{t\mid t-1}$, score the Gaussian
observation density $\log\mathcal{N} \big(\bm x_t;\,\bm B\bm z_{t\mid t-1},\,
\bm B\bm\Sigma_{t\mid t-1}\bm B^\top+\bm\Lambda_{\theta_s}\big)$
without any Kalman update/assimilation, and propagate $(\bm z,\bm\Sigma)$ using only the predict step. The two open-loop methods differ only in their covariance recursion: \emph{tied transport} (core collapse) uses the Jacobian recursion (Eq.~\ref{eq:ekf_predict_cov}); \emph{no
transport} (noise masking)
replaces this by the state-invariant recursion
$\bm\Sigma_{t+1\mid t}=\bm\Sigma_{t\mid t-1}+\bm\Gamma_{\theta_s}$,
so the only path through which the core enters the loss is the predicted mean.

\textbf{Filtered no covariance transport.}
This control instantiates the filtered \emph{no-transport} (blind uncertainty) control. The objective and optimization schedule are identical to KAFFEE. The only difference is the covariance recursion: $\bm\Sigma_{t\mid t-1}=\bm\Sigma_{t-1\mid t-1}+\bm\Gamma_{\theta_s}$ replaces the Jacobian-transported recursion of Eq.~\ref{eq:ekf_predict_cov}, so the core no longer enters the predictive covariance and the only state dependence in the predictive uncertainty comes (indirectly) through the Kalman update of the posterior. We can see this as running the regular KF but using an identity approximation for the local Jacobians. The mean update and Kalman gain are otherwise unchanged.

\textbf{Frozen-Core Shell Calibration.}
This row in Table~\ref{tab:main_l96_ablation} quantifies how much of the improvement is attainable by post-hoc noise calibration alone. We freeze the AL-RNN prior and optimize only the diagonal stochastic shell $(\bm \Gamma_{\theta_s},\bm \Lambda_{\theta_s})$ under the EKF innovation NLL. By construction the shell-off core metrics are identical to the deterministic AL-RNN prior; this control therefore exposes how much filtered probabilistic and stochastic dynamical performance is achievable without ever updating the deterministic core.

\subsection{Lorenz-96 dataset}
\label{app:lorenz_datasets}
For Table~\ref{tab:main_l96_ablation} we use the $N=20$-dimensional Lorenz-96 system \citep{lorenz1996predictability}:
\[
\frac{\mathrm d z_i}{\mathrm d t}
=
(z_{i+1}-z_{i-2})z_{i-1}-z_i+F,
\qquad i=1,\dots,20,
\]
with forcing $F=8$ and discretization step $\Delta t=0.04$. 
We denote the corresponding discrete-time latent state by $\bm z_t\in\mathbb R^{20}$. Let $\bm\mu_{\mathrm{train}}\in\mathbb R^{20}$ and $\bm s_{\mathrm{train}}\in\mathbb R_+^{20}$ denote the coordinate-wise mean and standard deviation of the clean training trajectory in raw coordinates. The models are trained on trajectories generated in standardized coordinates,
\[
\tilde{\bm z}_t
=
\operatorname{diag}(\bm s_{\mathrm{train}})^{-1}(\bm z_t^{\mathrm{raw}}-\bm\mu_{\mathrm{train}}),
\qquad
\tilde{\bm x}_t
=
\operatorname{diag}(\bm s_{\mathrm{train}})^{-1}(\bm x_t^{\mathrm{raw}}-\bm\mu_{\mathrm{train}}).
\]

The process noise ($\sigma_{\mathrm{proc}}$) is defined on a standardized scale. Concretely, we first fix reference coordinate scales $\bm s_{\mathrm{ref}}\in\mathbb R_+^{20}$ from a long clean run and then simulate the raw stochastic dynamics as a Runge--Kutta 4-th order drift step plus Euler--Maruyama diffusion,
\begin{equation*}
\bm z_{t+1}^{\mathrm{raw}}
=
\mathrm{RK4}_{\Delta t}(\bm z_t^{\mathrm{raw}})
+
\sigma_{\mathrm{proc}}\,\sqrt{\Delta t}\,\operatorname{diag}(\bm s_{\mathrm{ref}})\bm\varepsilon_t, \qquad
\bm\varepsilon_t\sim\mathcal N(\bm 0,\bm I_{20}).
\end{equation*}
Thus $\sigma_{\mathrm{proc}}$ specifies the diffusion magnitude relative to a fixed standardized scale, while the numerical integration itself is carried out in raw coordinates. After standardization by $\bm s_{\mathrm{train}}$, the resulting per-step diffusion in standardized coordinates is therefore approximately $\sigma_{\mathrm{proc}}\sqrt{\Delta t}$ per coordinate.

Observation noise is added separately, after latent simulation, in raw coordinates:
\begin{equation*}
\bm x_t^{\mathrm{raw}}= \bm z_t^{\mathrm{raw}}+
\sigma_{\mathrm{obs}}\,\operatorname{diag}(\bm s_{\mathrm{train}})\bm\eta_t,\qquad
\bm\eta_t \sim\mathcal N(\bm 0,\bm I_{20}),
\end{equation*}
After standardization, this becomes exactly
\[
\tilde{\bm x}_t = \tilde{\bm z}_t + \sigma_{\mathrm{obs}}\bm\eta_t,
\]
so $\sigma_{\mathrm{obs}}$ is the observation-noise standard deviation in standardized coordinates. Table~\ref{tab:main_l96_ablation} uses $\sigma_{\mathrm{obs}}=\sigma_{\mathrm{proc}}=0.1$ in this sense.

All Lorenz-96 models (AL-RNN, KAFFEE, probabilistic baselines) train on these standardized noisy observations.

\subsection{Metric computation and aggregation}
\label{app:metric_computation}
For the predictive metrics in Table~\ref{tab:main_l96_ablation}, the forecast horizon used is a quarter of a Lyapunov time, i.e. $\tfrac14\tau_{\lambda} = 0.25\lambda_{1,\mathrm{ref}}^{-1}$:
\[
H
=
\left\lfloor
\frac{0.25}{\lambda_{1,\mathrm{ref}}\,\Delta t}
\right\rfloor
= 4
\text{ steps,}
\]
where $\lambda_{1,\mathrm{ref}}^\mathrm{core}$ is the leading Lyapunov exponent of the ground truth deterministic dynamics. To evaluate the local predictive metrics, each seed/ensemble cohort first averages its predictive metrics over evenly spaced forecast starts, and the table medians and robust SE values are then formed across those independent runs. Throughout, we use $200$ ensemble members for probabilistic forecasts and an observation-noise floor of $10^{-3}$ for the standard deviation inside the Gaussian likelihood.

\textbf{NLL ($H$-step path mean).}
For each forecast start, we autoregressively propagate an ensemble forward $H$ steps. At every $h=1,\dots,H$ we compute the ensemble mean $\bm\mu_h$ and the sample standard deviation $\bm\sigma_h$, and evaluate the per-step Gaussian NLL against the noisy reference $\bm y_h$ by
\[
\operatorname{NLL}(\bm y_h;\bm\mu_h,\bm\sigma_h)
=
\frac{1}{N}
\sum_{j=1}^{N}
\frac12\left[
\log(2\pi\sigma_{h,j}^2)
+
\frac{(y_{h,j}-\mu_{h,j})^2}{\sigma_{h,j}^2}
\right].
\]
The reported NLL is the path mean over horizon $H$, then averaged over the $100$ forecast starts.

\textbf{CRPS.}
The CRPS is evaluated at the endpoint of the bounded horizon $H$, using $10$ evenly spaced forecast starts. For the reference observation $\bm y$ at the endpoint and the corresponding ensemble $\{\bm x^{(k)}\}_{k=1}^{K}$, we compute the marginal ensemble CRPS
\[
\operatorname{CRPS}(\bm y,\{\bm x^{(k)}\})
=
\frac{1}{N}
\sum_{j=1}^{N}
\left[
\frac{1}{K}\sum_{k=1}^{K}\lvert x_j^{(k)}-y_j\rvert
-
\frac{1}{2K^2}\sum_{k=1}^{K}\sum_{\ell=1}^{K}\lvert x_j^{(k)}-x_j^{(\ell)}\rvert
\right].
\]
The reported entry is the average of this endpoint CRPS over the $10$ starts.

\textbf{Dynamical grounding correlation $G_{\mathrm{tan}}$.}
To measure whether a model's local uncertainty is dynamically grounded, this metric correlates the one-step predictive variance growth against the local tangent expansion of the true system. We use $G_{\mathrm{tan}}$ as a controlled probabilistic DSR certification metric: it requires access to the true local Jacobians and is therefore not directly available on real-world datasets. In such settings, analogous diagnostics would need to rely on proxy tangent estimates, withheld-observation spread growth, or innovation residual checks rather than oracle Jacobians.

Let $\bm\Sigma^m_{t\mid t}$ be the posterior covariance of method/model $m$, and let
\[
\bm \Omega^m_{t\mid t}:=\bm B\bm\Sigma^m_{t\mid t}\bm B^\top
\]
be its observed-space posterior covariance. Let
$\bm\Sigma^m_{t+1\mid t}$ be the model's one-step predictive covariance, and let $\bm J^\star_t$ be the true one-step Jacobian at the corresponding clean state. We define the model-predicted observed-coordinate log-growth
\[
\Delta u^m_{j,t}
:=
\log\left[\bm B\bm\Sigma^m_{t+1\mid t}\bm B^\top\right]_{jj}-\log\left[\bm \Omega^m_{t\mid t}\right]_{jj},
\]
and the oracle tangent log-growth
\[
\Delta u^{\star,m}_{j,t}:=\log\left[\bm J^\star_t\bm \Omega^m_{t\mid t}\bm J_t^{\star\top}\right]_{jj}-\log\left[\bm \Omega^m_{t\mid t}\right]_{jj}.
\]
The first quantity is the variance growth assigned by the surrogate; the second one is the variance growth that the true local tangent dynamics would assign to the same posterior covariance. We then report the rank correlation
\begin{equation}
\label{eq:gtan}
G_{\mathrm{tan}}(m) :=
\operatorname{Spearman}_{(j,t)}
\left(\Delta u^m_{j,t},\Delta u^{\star,m}_{j,t}\right),
\end{equation}
pooled over observed coordinates and time. We set $G_{\mathrm{tan}}=0$ if either
argument is constant up to numerical precision. Subtracting the posterior
variance baseline on both sides makes the metric focus on local variance growth, rather than on absolute covariance magnitude. 

Unless otherwise stated, $G_{\mathrm{tan}}$ denotes this one-step diagnostic. It is computed after a $200$-step filter warm-up and then pooled over the following
$4\cdot 10^3$ prediction/update steps.

For the forecast-skill spectrum in Appx.~\ref{app:forecast_skill_spectrum}, we
also use an $H$-step extension, denoted $G_{\mathrm{tan}}^{(H)}$. Here we start from the same filtered covariance $\bm\Sigma^m_{t\mid t}$, but we then propagate the model's covariance for $H$ prediction steps without intermediate measurement updates.

\textbf{Stochastic dynamical fidelity.}
For shell-on stochastic evaluation, we initialize on a noisy state on held-out data, then store a total length of $10^5$ time steps after discarding the first $10^3$ steps to account for a potential transient phase, and compare them with the corresponding post-burn reference window. For the benchmark systems of the DynaMix experiments, the state-space divergence is the histogram KL divergence
\[
D_{\mathrm{stsp}}^{\mathrm{stoch}}
=
\sum_b p_{\mathrm{ref}}(b)
\log\frac{p_{\mathrm{ref}}(b)}{p_{\mathrm{gen}}(b)},
\]
using $30$ bins per coordinate. For Lorenz-96, the exponential scaling of bins with data dimension renders this histogram approach infeasible, so density estimation is executed instead by Gaussian mixture models (GMMs) \citep{koppe2019identifying,hess2023generalized,brenner2024almostlinear}. The reference and generated distributions are approximated by isotropic Gaussian mixtures centered on the trajectories,
\begin{equation}
p_{\mathrm{ref}}(\bm x)\approx\frac{1}{T}\sum_{t=1}^{T}\mathcal{N}(\bm x\mid\bm z_t,\bm\Sigma),
\qquad
p_{\mathrm{gen}}(\bm x\mid\bm z)\approx\frac{1}{L}\sum_{\ell=1}^{L}\mathcal{N}(\bm x\mid\bm z_\ell,\bm\Sigma),
\end{equation}
and the KL divergence between them is estimated via Monte Carlo \citep{hershey2007approximating}:
\begin{equation}
\tilde{D}_{\mathrm{stsp}} \big(p_{\mathrm{ref}}(\bm x),\,p_{\mathrm{gen}}(\bm x\mid\bm z)\big)
\approx
\frac{1}{n}\sum_{i=1}^{n}\log
\frac{\tfrac{1}{T}\sum_{t=1}^{T}\mathcal{N}(\bm x_i\mid\bm z_t,\bm\Sigma)}
     {\tfrac{1}{L}\sum_{\ell=1}^{L}\mathcal{N}(\bm x_i\mid\bm z_\ell,\bm\Sigma)},
\label{eq:kl_mc}
\end{equation}
using $n$ Monte Carlo samples $\bm x_i\sim p_{\mathrm{ref}}$. The covariance $\bm\Sigma=\sigma^2\bm I$ is a scaled identity, and for compatibility with \citep{hess2023generalized,brenner2024almostlinear} we set $\sigma^2=1$. In our implementation we use $T=L=10^5$ samples and $n=10^3$ MC draws.

The unstable-volume growth rate of the stochastic dynamics and its error are given by the sum of positive Lyapunov exponents $\lambda^\mathrm{stoch}_i$:
\[
h^\mathrm{stoch}_\lambda:=\sum_i \max(\lambda_i^\mathrm{stoch},0),
\qquad
|\Delta h^\mathrm{stoch}_\lambda|:=|\hat h^\mathrm{stoch}_\lambda-h^\mathrm{stoch}_{\lambda,\mathrm{ref}}|,
\]
using QR re-orthonormalization on the (shell-on) one-step Jacobian recursion after the rollout burn-in to estimate Lyapunov spectra \citep{benettin1980lyapunov,geist1990comparison}. For the reference values, the same algorithm is applied on the true Jacobians. 

\textbf{Deterministic dynamical fidelty.}
To evaluate the autonomous core we initialize the deterministic rollout from a clean (noise-free) state, and apply the same rules (rollout lengths, transient burn-in) as for the stochastic dynamical fidelity, using the noise-free ground truth. The autonomous deterministic versions of the state space divergence $D_\mathrm{stsp}^\mathrm{core}$ and unstable-volume growth rate (KS entropy) $h_{\lambda}^{\mathrm{core}}$ are evaluated analogously, by stripping the shell, i.e. generating rollouts only using the autonomous transition map.

To evaluate spectral overlap, we use the mean Hellinger distance between the per-coordinate power spectra, $D_H^{\mathrm{core}}$, after each signal is centered, standardized, transformed with a real Fourier transform, smoothed by a Gaussian kernel of width $20$ (chosen according to prior literature \citep{hess2023generalized,brenner2024almostlinear}), and renormalized to unit mass. Writing the smoothed normalized spectra as $p_j(\omega)$ and $q_j(\omega)$, we report
\[
D_H^{\mathrm{core}}
=
\frac{1}{N}\sum_{j=1}^{N}
\sqrt{1-\sum_{\omega}\sqrt{p_j(\omega)q_j(\omega)}}.
\]

\textbf{Aggregation.}
We first compute every metric per seed or per disjoint ensemble cohort and then report the median together with the robust standard error of the median,
\[
\operatorname{SE}_{\mathrm{median}}
\approx
0.929\,\frac{\operatorname{IQR}}{\sqrt{n}}.
\]
Both Table~\ref{tab:main_l96_ablation} and Table~\ref{tab:dynamix_results_table} use $20$ independent seeds.

\subsection{DynaMix architecture and probabilistic adaptation protocol}
\label{app:dynamix_practical_details}

This section records the complete numerical results and additional details of the DynaMix benchmark experiment in Section~\ref{subsec:dynamix_results}. We do not re-state the full DynaMix architecture; for that we refer to \citep{hemmer2025true}. The purpose here is to state the state-space interpretation of DynaMix used in the benchmark, the shell parameterization, the Jacobian construction, the KAFFEE protocol, and the definitions of the applied baselines.

\textbf{State-space form of DynaMix.} DynaMix is a DSR foundation model, in the sense that its zero-shot dynamics are not supposed to (only) constitute accurate forecasts, but also capture the long-term structure underlying the (chaotic) data-generating system \citep{hemmer2025true,durstewitz2026position}. While DynaMix is aligned with deterministic DSR, our present work emphasizes the need for uncertainty-aware, probabilistic DSR (foundation) models. To pursue this with DynaMix, we apply KAFFEE to embed it into a Bayesian filter/state-space model framework. In particular, we treat DynaMix as a pretrained latent transition model with fixed context.

Let $\bm z_t\in\R^M$ denote the latent state, $\bm x_t\in\R^N$ the observed state, and $\mathrm{ctx}$ the context supplied to the pretrained model. This context consists of a reference sequence of observations from the target system, from which DynaMix is asked to zero-shot the underlying dynamics. The transition is written as
\[
\bm z_{t+1}=F_{\mathrm{DynaMix}}(\bm z_t,\mathrm{ctx}, \bm\varepsilon_t^{\mathrm{gate}}), \qquad \bm \varepsilon_t^\mathrm{gate} \sim \mathcal{N}(0,\bm \Sigma_\mathrm{gate}),
\]
with DynaMix's mixture-of-experts form
\[
F_{\mathrm{DynaMix}}(\bm z_t,\mathrm{ctx})
=
\sum_{k=1}^{E} w_{t,k} f_k(\bm z_t), \qquad \bm w_t=g_{\psi}(\mathrm{ctx},\bm z_t,\bm\varepsilon_t^{\mathrm{gate}}),
\]
where $f_k:\mathbb R^M\to\mathbb R^M$ denotes the $k$-th (AL-RNN) expert's transition map, and $w_{t,k}$ is its state- and context-dependent mixture weight. The gating network $g_{\psi}$ combines a convolutional neural network encoding of the context trajectory with the current latent state via a state attention mechanism to produce these expert-weights for the $E$ experts on a simplex $\bm w_t\in\Delta^{E-1}$. This state attention mechanism contains an internal diagonal Gaussian perturbation (``exploration noise'') $\bm \varepsilon^{\mathrm{gate}}_t$.

\textbf{Gate stochasticity.}
The pretrained DynaMix model carries a small internal standardized per-coordinate noise in its attention computation: before context-attention weights are formed, the current latent state is projected to observation space and perturbed by a learnable diagonal Gaussian noise term. In the model used here,
$\bm{\Sigma}_\mathrm{gate} \approx \mathrm{diag}(0.003, 0.003, 0.008)$ in standardized coordinates. We retain this mechanism during adaptation rather than modifying the pretrained foundation model, see Eq.~\ref{eq:dynamix_ssm}. Given the relatively small variance of the gate stochasticity relative to the observation and process noise variances, we forego analytical marginalization in favor of a Monte Carlo approximation (with a single sample). We interpret the learned diagonal process covariance $\bm\Gamma$ as an effective process noise term that absorbs residual error from this approximation.
We refer to $F_{\mathrm{DynaMix}}$ with the external Gaussian shell removed as the shell-off core transition.

\textbf{DynaMix specifications.} The model specifications were directly adopted from \cite{hemmer2025true}. Specifically, the model uses has $M=30$ latent dimensions, $N=3$ observed dimensions, $E=10$ AL-RNN experts with $P=2$ ReLU-gated coordinates, respectively. Each expert is therefore piecewise affine,
\[
f_k(\bm z)=\bm A_k\bm z+\bm W_k\bm\phi_k^\ast(\bm z)+\bm h_k,
\]
with AL-RNN parametrization as described in Appx.~\ref{app:alrnn_implementation_details}.

Throughout the DynaMix experiments, the first $N$ latent coordinates are identified with the observable state. In other words, we apply the same observation operator $\bm B$ that was used for the single AL-RNN experiments on Lorenz-96 (Eq.~\ref{app:observation_operator}).

\textbf{Shell parameterization.} All probabilistic DynaMix baselines embed the model into a state-space framework using an external Gaussian shell around the deterministic core. We use the same diagonal parameterization for every DynaMix variant:
\begin{equation}
\label{eq:dynamix_ssm}
\bm z_{t+1}=F_{\mathrm{DynaMix}}(\bm z_t,\mathrm{ctx},\bm \varepsilon_t^{\mathrm{gate}})+\bm \Gamma^{1/2}\bm \xi_t,
\qquad
\bm x_t=\bm B\bm z_t+\bm \Lambda^{1/2}\bm \eta_t,
\end{equation}
where $\bm \xi_t \sim \mathcal{N}(\bm 0, \bm I_M)$ and $\bm \eta_t \sim \mathcal{N}(\bm 0, \bm I_N)$,
and
\[
\bm \Gamma=\diag(q_1^2,\ldots,q_M^2),
\qquad
\bm \Lambda=\diag(r_1^2,\ldots,r_N^2).
\]
These initial scales are chosen analogously to the Lorenz-96 AL-RNN experiments, via teacher-forced residuals and variance split $\alpha = 0.9$ (see Appx.~\ref{app:kaffee_opt_details}).

\textbf{In-context KAFFEE protocol.} We use the released model \texttt{dynamix-3d-alrnn-v1.0},
published alongside \cite{hemmer2025true} in \texttt{\url{https://github.com/DurstewitzLab/DynaMix-python}}. This model was pretrained on a benchmark of three-dimensional chaotic attractors \citep{gilpin2021chaos}. In our experiments, we adapt it to previously unseen systems. Table~\ref{tab:dynamix_systems} reports the 13 held-out chaotic ODEs we evaluated and Table~\ref{tab:dynamix_results_table} shows the corresponding aggregated results.

\begin{table}[t]
\caption{\textbf{Held-out DynaMix benchmark systems.} The table lists the $13$ $3$-dimensional ODE systems we used for the DynaMix in-context KAFFEE adaptation, together with the 
system parameters, numerical integration step size $\Delta t$, transient discarding (burn-in) length, and the reference 
leading Lyapunov exponent $\lambda_1^{\mathrm{ref}}$. The exponents listed here are computed using the ground-truth Jacobians along $10^4$ steps of reference rollouts using QR re-orthonormalization \citep{geist1990comparison,benettin1980lyapunov}. For a complete description, visualization and sources of these systems we refer to \citep{gilpin2021chaos}.}
\centering
\small
\setlength{\tabcolsep}{5pt}
\renewcommand{\arraystretch}{0.97}
\resizebox{\linewidth}{!}{%
\begin{tabular}{@{}llclc@{}}
\toprule
\textbf{System} & \textbf{Parameters} & $\Delta t$ & \textbf{Burn-in} & $\lambda_1^{\mathrm{ref}}$ \\
\midrule
Aizawa & $a{=}0.95,\ b{=}0.7,\ c{=}0.6,\ d{=}3.5,\ e{=}0.25,\ f{=}0.1$ & 0.02 &  1000 & 0.088 \\
Burke--Shaw & $s{=}10,\ v{=}4.272$ & 0.005 & 1000 & 0.678 \\
Chen & $a{=}35,\ b{=}3,\ c{=}28$ & 0.005  & 2000 & 2.042 \\
Chua & $\alpha{=}15.6,\ \beta{=}28,\ m_0{=}{-}1.143,\ m_1{=}{-}0.714$ & 0.02 & 500 & 0.407 \\
Dadras & $p{=}3,\ q{=}2.7,\ r{=}1.7,\ s{=}2,\ e{=}9$ & 0.02  & 500 & 0.367 \\
Genesio--Tesi & $a{=}0.44,\ b{=}1.1,\ c{=}1$ & 0.02  & 500 & 0.108 \\
Hadley & $a{=}0.2,\ b{=}4,\ f{=}8,\ g{=}1$ & 0.02 & 500 & 2e-4 \\
Halvorsen & $a{=}1.89$ & 0.01 & 1000 & 0.005 \\
Lorenz--63 & $\sigma{=}10,\ \rho{=}28,\ \beta{=}8/3$ & 0.02 & 500 & 0.902 \\
Nose--Hoover & $a{=}1.5$ & 0.02 & 500 & 0.005 \\
Qi--Chen & $a{=}38,\ b{=}8/3,\ c{=}80$ & 0.005 & 2000 & 3.677 \\
Rabinovich--Fabrikant & $\gamma{=}0.87,\ \alpha{=}1.1$ & 0.01 & 1000 & 0.215 \\
Sprott B & --- & 0.05 & 500 & 0.207 \\
\bottomrule
\end{tabular}
}
\label{tab:dynamix_systems}
\end{table}

For each system, the first $2\cdot 10^3$ points form the fixed context. Our probabilistic KAFFEE-adaptation is only training on windows drawn inside this context. The scored window length $W_{\mathrm{scored}}$ is chosen per system according to its Lyapunov time:
\[
 W_{\mathrm{scored}} = \mathrm{round}\left(\frac{\tau_\lambda}{\Delta t}\right).
\]
We clipped $W_{\mathrm{scored}}$ to a range between $24$ and $128$ steps. For evaluation, we applied the same protocol that was used in the Lorenz-96 experiments. For KAFFEE, the same $8$-steps covariance burn in and variance split $\alpha = 0.9$ detailed in Appx.~\ref{app:kaffee_opt_details} was applied for every of the $13$ systems, which also highlights robustness with respect to these KAFFEE-hyperparameters.

\textbf{Jacobian construction.} To apply KAFFEE on the shell-off DynaMix core, we require a local transition Jacobian. We adopt an analytical mixture-Jacobian construction: the gating network is evaluated once at the current latent state under one realization of the gate noise $\bm \varepsilon^{\mathrm{gate}}_t$, the resulting mixture weights $\bm w_t \in \Delta^{E-1}$ are \emph{detached} (i.e., treated as locally constant in $\bm z$), and the local Jacobian is assembled as the convex combination of the experts' closed-form AL-RNN Jacobians,
\[
\bm J_t
\approx
\sum_{k=1}^{E} w_{t,k}\, \bm J_k(\bm z_{t\mid t})
\in\R^{M\times M},
\qquad
\bm J_k(\bm z) = \nabla_{\bm z} f_k(\bm z) =\bm A_k + \bm W_k\,\bm D_k(\bm c_t),
\]
using the AL-RNN expert transition maps $f_k$, where $\bm A_k$ is diagonal, $\bm W_k$ is full, and $\bm D_k(\bm c_t)=\mathrm{diag}(1,\ldots,1,c_{t,1},\ldots,c_{t,P})$ is the binary switching matrix evaluated at the current latent state (i.e., $c_{t,j}=\mathbf{1}\{\bm z_{t,M-P+j}>0\}$), see Eq.~\ref{app:alrnn-transition}. Two consequences are worth noting. First, the gradient of the objective with respect to the DynaMix \emph{core} parameters flows through both the experts $\{\bm A_k,\bm W_k,\bm h_k\}$ and (via $F_{\mathrm{DynaMix}}$ and the shell) the propagated covariance, but \emph{not} through the gating-weight derivatives $\partial \bm w_t/\partial \bm z$. Second, the omitted gating-derivative term is
\[
\sum_{k=1}^E f_k(\bm z)\nabla_{\bm z} w_{t,k}(\bm z)^\top.
\]
Because the DynaMix gating depends on the current latent state only through its observed projection $\bm B\bm z$, this term is low-rank (its rank is at most $N$). We omit this term in the Jacobian-based covariance propagation and use the same convention  for all four probabilistic DynaMix adaptation methods that update the core (Tab.~\ref{tab:dynamix_results_table}).

\textbf{KAFFEE-DynaMix training.} The DynaMix ablations shown in Table~\ref{tab:dynamix_results_table} use the same general procedure as the AL-RNN study (see Algorithm~\ref{alg:kaffee_opt}): starting from a teacher-forced-residual shell initialization ($\alpha=0.9$), KAFFEE jointly optimizes the DynaMix core and the process/observation noise under the innovation NLL. Optimization uses Adam ($\beta_1 = 0.9$, $\beta_2 = 0.999$) \citep{kingma2014adam}, learning rate $5\times10^{-5}$, batch size $16$, gradient clipping $1.0$, and proceeds for 10 iterations. This rapid fine-tuning phase reflects the fact that DynaMix is a pretrained foundation model requiring only minimal adaptation to the target system. The context representation is held fixed for all windows of a given target system.

\subsubsection{DynaMix evaluation protocol}
\label{app:dynamix_practical_eval}

Training objectives differ across rows, but all predictive metrics are evaluated by the same finite-ensemble protocol. Held-out ensemble evidence is evaluated with 256 members. Local predictive metrics are one-step NLL and the ten-step forecast root-mean-square error
\begin{equation}
\label{eq:rmse_10}
\mathrm{RMSE}_{10} = \mathbb{E}_{t}\!\left[\sqrt{\tfrac{1}{N}\big\|\bar{\bm x}_{t+10\mid t}-\bm x^\star_{t+10}\big\|_2^2}\right],
\end{equation}
where $\bar{\bm x}_{t+10\mid t}$ is the ensemble-mean ten-step forecast given the filtered state $\bm z_{t\mid t}$ on the held-out trajectory and $\bm x^\star_{t+10}$ is the corresponding clean reference observation; the expectation is taken over starts on held-out windows. We report $\mathrm{RMSE}_{10}$ in standardized observation units. To probe the local probabilistic structure, we additionally report the dynamical grounding correlation $G_{\mathrm{tan}}$ (Eq.~\ref{eq:gtan}) computed in observed coordinates, with the true local Jacobians $\bm J^\star_t$ taken from the analytic flow of the target ODE (Runge-Kutta-4th order step). Deterministic dynamical fidelity metrics ($D_{\mathrm{stsp}}$, $D_H$) are computed from deterministic rollouts of the adapted core model with the stochastic shell removed, since the evaluation question is whether the transferred dynamics ``survive'' probabilistic adaptation.

\subsubsection{DynaMix baseline definitions}
\label{app:dynamix_practical_baselines}

\paragraph{How to read Table~\ref{tab:dynamix_results_table}.} In the main text we report a compact comparison between KAFFEE-DynaMix, frozen-core
shell calibration (``noise-only''), and the open-loop tied-transport control (``open-loop''). The full table in this section expands this comparison to the complete objective $\times$ covariance-propagation taxonomy from Fig.~\ref{fig:taxonomy}. Table~\ref{tab:dynamix_results_table} thus mirrors the mechanistic two-axis taxonomy of the Lorenz-96 ablation (Table~\ref{tab:main_l96_ablation}, Fig.~\ref{fig:taxonomy}): \emph{Objective} $\in\{\text{open-loop},\,\text{filtered}\}$ $\times$ \emph{Covariance transport} $\in\{\text{tied (Jacobian)},\,\text{no transport (identity)}\}$. All four probabilistic baselines share one common training schedule, the only varying factor between them is the per-step objective and the covariance recursion.

The full DynaMix table mirrors the Lorenz-96 taxonomy at the foundation-model scale. The filtered no-transport row obtains strong 1-step NLL and ten-step RMSE, but has much lower $G_{\mathrm{tan}}$ than KAFFEE-DynaMix, consistent with locally blind uncertainty despite good average predictive scores. Conversely, the open-loop rows degrade the shell-off core metrics after adaptation, consistent with the core-damaging shortcuts identified in the Lorenz-96 ablation: tied transport exhibits the core-collapse signature, while no transport exhibits the noise-masking signature. Among methods that update the DynaMix core, KAFFEE-DynaMix is the only one tested here that combines high local grounding of uncertainty with small deterministic core drift. Overall, the DynaMix results support the same qualitative picture as the Lorenz-96 experiment in Table~\ref{tab:main_l96_ablation}.

\textbf{DynaMix base.} The pretrained shell-off model with no adaptation. This is the core transition model for all probabilistic adaptation methods in this experiment.

\textbf{Frozen core shell calibration.} The diagonal shell is optimized under the EKF innovation objective (Eq.~\ref{eq:filtered_likelihood}) while the pretrained DynaMix core is held fixed. The autonomous core rollout is therefore identical to the DynaMix base rollout by construction; the row exists to isolate the contribution of shell-only calibration from any adaptation of the transition map.

\textbf{Open-loop, tied transport.} This is the direct DynaMix analogue of the Lorenz-96 ``Open-loop, tied transport'' control: DynaMix core and shell are updated jointly under a Gaussian open-loop NLL, where at each scored step the predictive mean and covariance are propagated by the EKF predict step (Eq.~\ref{eq:ekf_predict_cov}).

\textbf{Open-loop, no transport.} Identical to the Open-loop tied baseline, except that the predictive covariance recursion is replaced by the state-invariant version $\bm\Sigma_{t-1\mid t-1}+\bm\Gamma_{\theta_s}$ (see also the Lorenz-96 analogue, Appx.~\ref{app:alrnn_probabilistic_baselines}).

\textbf{Filtered, tied transport (KAFFEE-DynaMix).} Shell and core are updated jointly under the EKF innovation NLL (Eq.~\ref{eq:filtered_likelihood}) with full Jacobian-transported predictive covariance, according to Algorithm~\ref{alg:kaffee_opt}. This is the direct analogue of the Lorenz-96 KAFFEE method.

\textbf{Filtered, no transport.} Identical to KAFFEE-DynaMix except that the predict-step covariance recursion is replaced by the state-invariant version; the EKF update is still applied at every step. This is the direct analogue of the Lorenz-96 Blind-Uncertainty control.

\begin{table}[t]
\caption{\textbf{In-context probabilistic adaptation of DynaMix on 13 held-out ODE systems.} All probabilistic rows share the same 10-step in-context adaptation schedule, diagonal noise parametrization, and are initialized from the same pretrained DynaMix core. \textbf{Bold}: median whose $\pm$ SE interval overlaps the column's best among methods that update the core; \textcolor{red}{red}: signature failure mode (bracketed label). Medians $\pm$ SEs across 13 systems $\times$ 20 seeds; see Appx.~\ref{app:dynamix_practical_eval}--\ref{app:dynamix_practical_baselines}.}
\centering
\setlength{\tabcolsep}{2.5pt}
\renewcommand{\arraystretch}{0.96}
\resizebox{\linewidth}{!}{%
\begin{tabular}{@{}l cc c cc@{}}
\toprule
 & \multicolumn{3}{c}{\textbf{Local Probabilistic Behavior}} & \multicolumn{2}{c}{\textbf{Deterministic Dynamics (Core)}} \\
\cmidrule(lr){2-4} \cmidrule(lr){5-6}
\textbf{Model / Objective} & NLL (1-step) $\downarrow$ & $\mathrm{RMSE}_{10}\downarrow$ & $G_{\mathrm{tan}}\uparrow$ & $D_{\mathrm{stsp}}^{\mathrm{core}}/D_{\mathrm{stsp}}^{\mathrm{base}}\downarrow$ & $D_H^{\mathrm{core}}\downarrow$\\
\midrule
\multicolumn{6}{@{}l}{\textbf{\emph{Zero-shot transferred core}}}\\
\hspace{1em}DynaMix base (no adaptation) & - & - & - & $1.000{\pm}0.000$ & $0.232{\pm}0.007$\\
\midrule
\hspace{1em}Frozen-Core Shell Calibration & $0.890{\pm}0.028$ & {\bfseries\boldmath $0.223{\pm}0.004$} & {\bfseries\boldmath $0.690{\pm}0.016$} & $1.000{\pm}0.000$ &  $0.232{\pm}0.007$ \\
\multicolumn{6}{@{}l}{\textbf{\emph{Open-Loop objective}}}\\
\hspace{1em}No transport \textcolor{gray}{\scriptsize [Noise Masking]} & $0.929{\pm}0.038$ & $0.232{\pm}0.005$ & \textcolor{red}{$0.298{\pm}0.023$} & \textcolor{red}{$1.183{\pm}0.023$} & \textcolor{red}{$0.357{\pm}0.014$} \\
\hspace{1em}Tied transport \textcolor{gray}{\scriptsize [Core Collapse]} & $0.887{\pm}0.036$ & $0.227{\pm}0.005$ & \textcolor{red}{$0.268{\pm}0.023$} & \textcolor{red}{$1.102{\pm}0.016$} & \textcolor{red}{$0.307{\pm}0.012$} \\
\multicolumn{6}{@{}l}{\textbf{\emph{Filtered objective}}}\\
\hspace{1em}No transport \textcolor{gray}{\scriptsize [Blind Uncertainty]} & {\bfseries\boldmath $0.766{\pm}0.030$} & {\bfseries\boldmath $0.219{\pm}0.004$} & \textcolor{red}{$0.308{\pm}0.017$} & $1.053{\pm}0.014$ & $0.273{\pm}0.007$ \\
\hspace{1em}Tied transport (KAFFEE-DynaMix) & {\bfseries\boldmath $0.806{\pm}0.030$} & {\bfseries\boldmath $0.216{\pm}0.004$} & {\bfseries\boldmath $0.688{\pm}0.016$} & {\bfseries\boldmath$1.038{\pm}0.012$} & {\bfseries\boldmath$0.256{\pm}0.006$}\\
\bottomrule
\end{tabular}
}
\label{tab:dynamix_results_table}
\end{table}

\section{Additional Experiments}
\label{sec:appendix_additional_experiments}

This section contains two supporting checks. First, we verify that the Lorenz-96 ablation is not tied to the particular forecast horizon used in Table~\ref{tab:main_l96_ablation}. Second, we use two simple non-chaotic systems to isolate the classical noisy-state attenuation mechanism discussed in Appx.~\ref{app:attenuation_diagnostic}. These controls are deliberately simpler than the chaotic benchmarks. Their purpose is to separate a known source of contraction bias from the covariance-volume mechanism studied in the main text: if noisy observations are treated directly as states, a predictive Gaussian objective can learn an attenuated observation-to-observation map rather than the latent transition \citep{fuller1987measurement,staudenmayer2005measurement}.

\subsection{Forecast-skill spectrum across lead times on Lorenz-96}
\label{app:forecast_skill_spectrum}

Table~\ref{tab:main_l96_ablation} reports local predictive scores at $H=\tfrac14\tau_\lambda$. To check that the conclusions do not depend on this single choice, Fig.~\ref{fig:l96_forecast_skill_spectrum} evaluates the same
$20$ Lorenz-96 seeds across $H\in\{0.13,0.27,0.60,1.00\}\tau_\lambda$. The NLL is the autoregressive ensemble NLL, CRPS is scored at the endpoint $t+H$, and $G_{\mathrm{tan}}^{(H)}$ measures whether predictive variance growth over the assimilation cycle follows local tangent expansion.

\begin{figure}[!htbp]
  \centering
  \includegraphics[width=\linewidth]{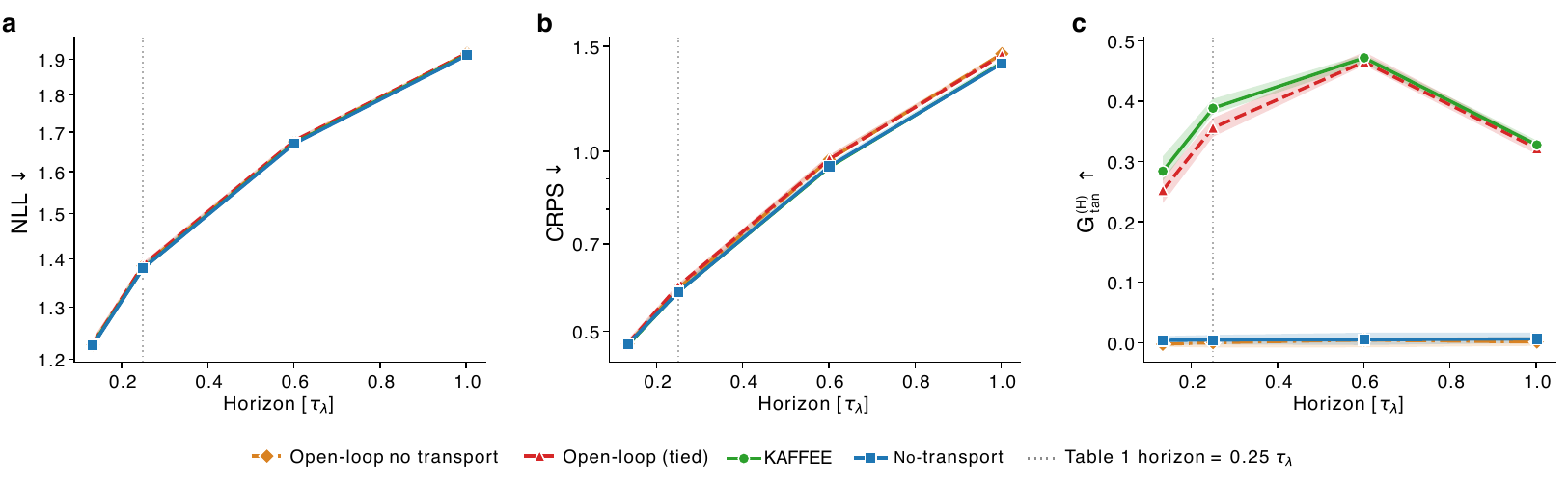}
  \caption{\textbf{Lorenz-96 forecast-skill spectrum across  horizons.}
  Predictive scores targeted at marginal calibration alone do not certify a dynamically grounded probabilistic surrogate. The no-transport baselines remain competitive in  NLL/CRPS (a-b), but its uncertainty is locally blind (c). Filtered objectives maintain Jacobian-transported uncertainty across horizons. Lines show medians across $20$ seeds; shaded bands are bootstrap $95\%$ CIs.}
  \label{fig:l96_forecast_skill_spectrum}
\end{figure}

The spectrum over horizons supports the same conclusion as Table~\ref{tab:main_l96_ablation}. Open-loop objectives deteriorate as the forecast horizon grows. In contrast, filtered objectives remain competitive on average predictive scores. However, NLL and CRPS still do not
separate KAFFEE from the filtered no-transport control: the latter remains close in average forecast skill while failing to transport covariance through the local Jacobians. The grounding metric separates these cases. KAFFEE maintains a large positive $G_{\mathrm{tan}}^{(H)}$, whereas the no-transport recursion stays
near zero at short horizons and only gains a weak residual correlation at longer horizons from monotone reaccumulation of $\bm\Gamma$. Thus the blind-uncertainty failure mode is not a horizon-specific artifact.

\subsection{Non-chaotic controls: isolating noisy-state attenuation}
\label{app:nonchaotic_controls}

This work examines DPC mechanisms in the context of chaotic DSR. A related but simpler mechanism is classical noisy-state attenuation: if noisy observations are treated directly as states, then one-step Gaussian prediction estimates an observation-to-observation map rather than the latent transition (i.e., it becomes errors-in-variables regression). Appx.~\ref{app:attenuation_diagnostic} shows this
analytically in the scalar linear-Gaussian case.

The following controls isolate this simpler mechanism. In both experiments, the competing methods use the same transition family and initialization. The difference is only the objective. The \emph{noisy-state predictive objective} (``naive maximum likelihood estimation (MLE)'') scores Gaussian predictions directly on noisy observations, i.e. it treats $\bm x_t$ as if it were the latent state. KAFFEE instead fits the corresponding state-space model by filtering the latent state and scoring innovations. In the linear-Gaussian case, the EKF used by KAFFEE is the standard Kalman filter, so KAFFEE's training objective reduces to the exact Kalman innovation maximum likelihood. The deterministic core of KAFFEE and the noisy-state predictive objectives are not pretrained models here, but initialized randomly by drawing from Gaussian distributions.

\textbf{Stable linear spiral.} We first consider a two-dimensional stable linear spiral observed with additive Gaussian noise. As shown in Fig~\ref{fig:nochaos_exp1_main}, the noisy-state predictive objective collapses the spectral radius, while KAFFEE recovers the correct autonomous damping rate.  The same pattern persists across an observation-noise sweep: as observation noise increases, the noisy-state objective becomes increasingly contractive, whereas the filtering remains close to the true expansion until extreme noise levels. Fig.~\ref{fig:nochaos_exp1_sweep} shows the same effect across observation-noise
levels: the noisy-state objective becomes progressively more contractive as measurement noise increases, whereas KAFFEE remains close to the true transition
until extreme noise levels.

\begin{figure}[!htpb]
    \centering
    \includegraphics[width=0.98\linewidth]{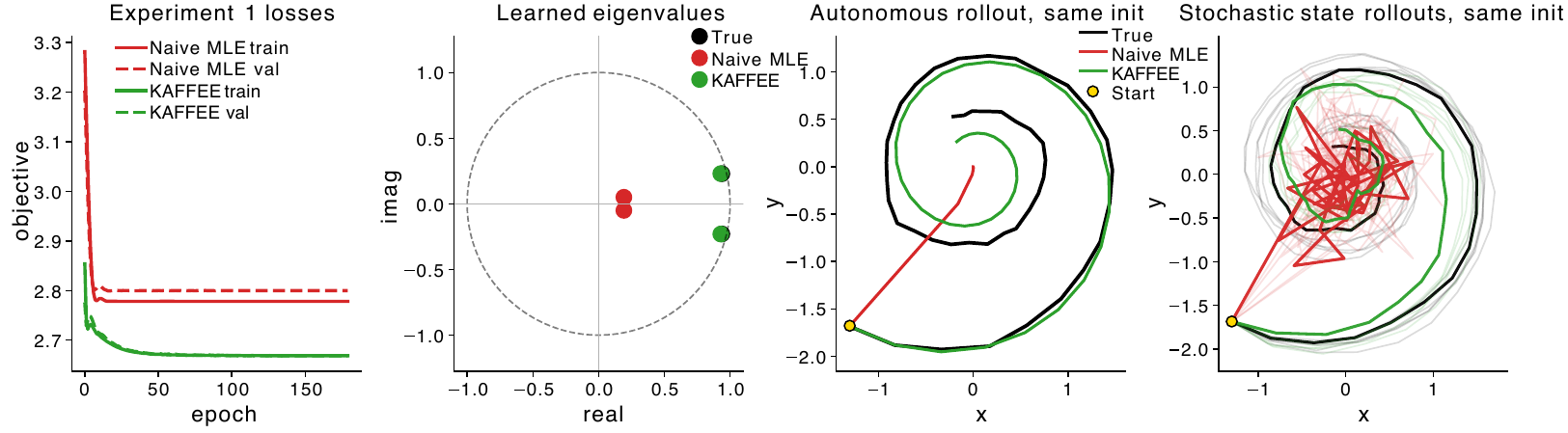}
    \caption{\textbf{Attenuation on a noisy linear spiral.}
    From left to right: optimization traces, eigenvalues of learned Jacobians, deterministic autonomous rollouts from the common initialization, and stochastic state rollouts from that same initialization. Both models share the same linear Gaussian state-space parametrization and initialization. The noisy state predictive objective over-damps the learned autonomous map, while KAFFEE recovers the correct spectral radius and rollout geometry.}
    \label{fig:nochaos_exp1_main}
\end{figure}

\begin{figure}[!htbp]
    \centering
    \includegraphics[width=\linewidth]{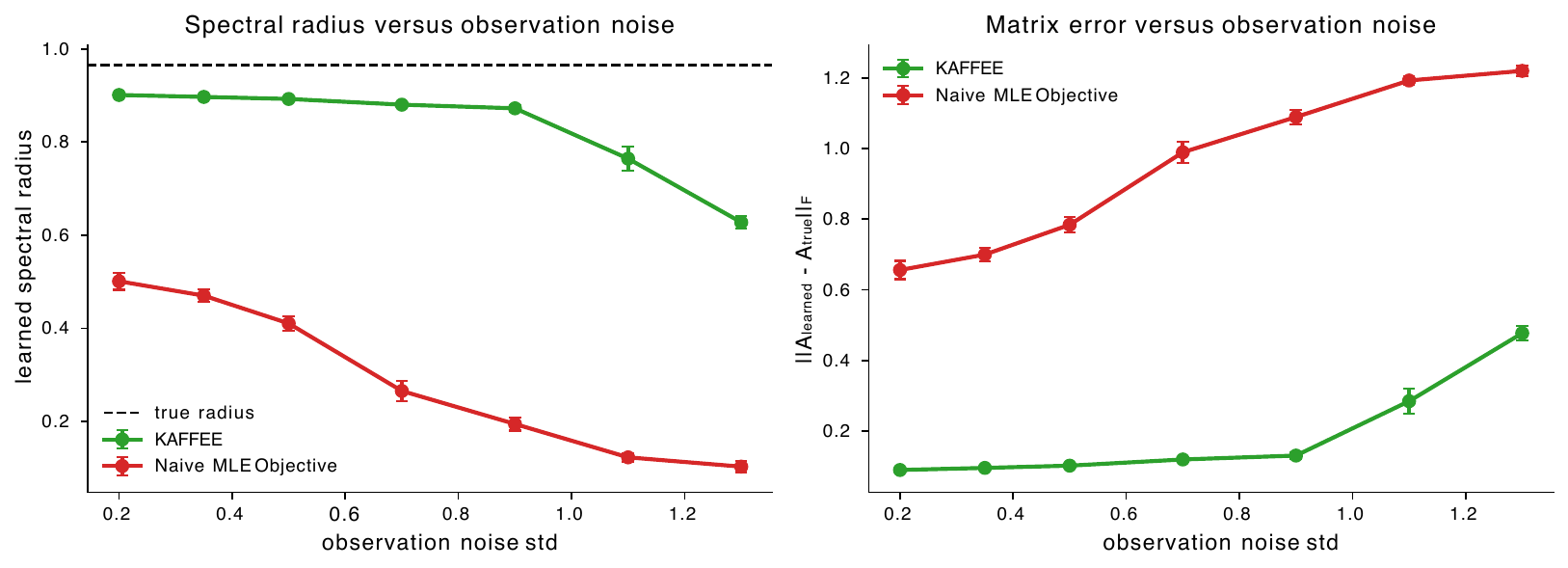}
    \caption{\textbf{Observation-noise sweep for the linear spiral.}
    The noisy-state predictive objective becomes progressively more contractive as observation noise increases. KAFFEE remains closer to the true stable transition because it separates latent-state inference from observation
    noise. Error bars show $95\%$ confidence intervals over $10$ seeds.}
    \label{fig:nochaos_exp1_sweep}
\end{figure}

\textbf{Van der Pol limit cycle.}
We next use a nonlinear but non-chaotic system: a noisy Van der Pol limit cycle.
The data are generated from
\[
\dot{\bm z}
=
\begin{bmatrix}
v \\
\mu(1-x^2)v-x
\end{bmatrix},
\qquad
\mu=1.6,
\]
with RK4 step size $\Delta t=0.05$, process noise
$\sigma_{\mathrm{proc}}=0.035$, and observation noise
$\sigma_{\mathrm{obs}}=0.25$. Both methods fit the same nonlinear Gaussian state-space model with diagonal process and observation covariances and a cubic polynomial transition function
\[
\bm f_\theta(\bm z) = \bm A \bm z + \bm W \bm \phi(\bm z) + \bm h,
\qquad
\bm z = [x, v]^\top,
\]
where
\[
\bm \phi([x,v]^\top)
=
\bigl[x^2, xv, v^2, x^3, x^2 v, x v^2, v^3\bigr]^\top.
\]

Fig.~\ref{fig:nochaos_exp2_rollout} shows the same qualitative failure in a
nonlinear setting. The noisy-state predictive NLL learns an overcontractive
autonomous map: its shell-off rollout does not return to the limit cycle and
instead collapses to a spurious fixed point. The stochastic shell-on rollout can
partly ``mask'' this deterministic failure, but it does not repair the learned
core. KAFFEE preserves the transient and returns to the correct orbit, with local Jacobian spectra closer to those of the true flow map.

\begin{figure}[!htbp]
    \centering
    \includegraphics[width=0.97\linewidth]{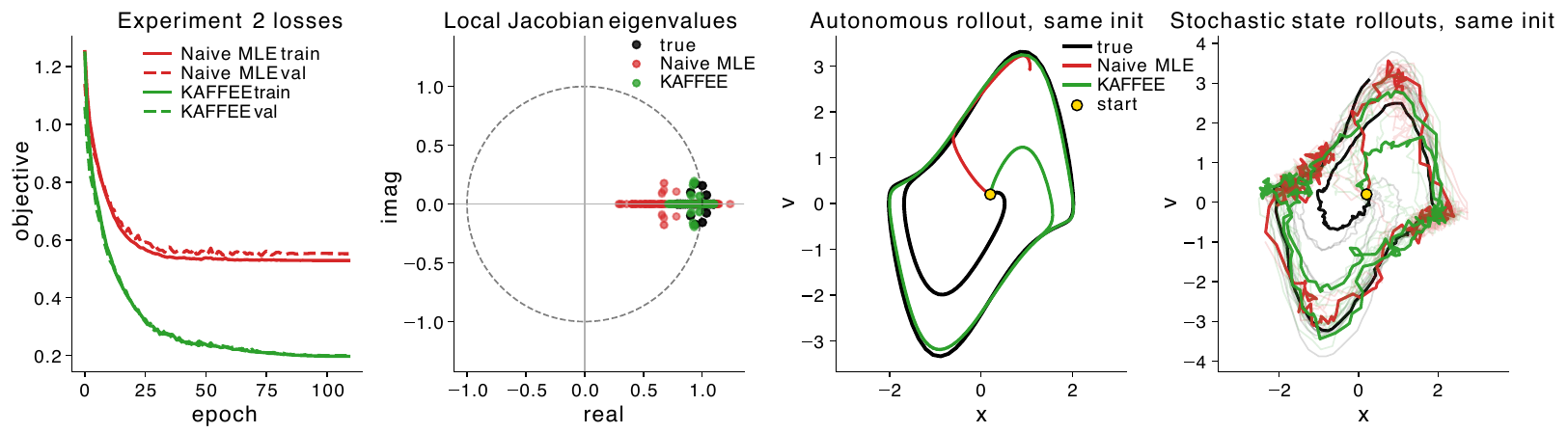}
    \caption{\textbf{Noisy Van der Pol.} From left to right: optimization traces, Jacobian eigenvalues along a deterministic reference rollout, deterministic autonomous rollouts from the common initialization, and stochastic state rollouts from that same initialization. Both methods use the same cubic polynomial transition family. The noisy-state predictive objective learns an overcontractive autonomous map that collapses to a spurious fixed point, while the stochastic rollout ``masks''' this. KAFFEE preserves the limit-cycle geometry by fitting the latent state-space model through innovation scoring.}
    \label{fig:nochaos_exp2_rollout}
\end{figure}

Figure~\ref{fig:nochaos_exp2_var20} shows a covariance propagation diagnostic for this example. At each state we initialize the same isotropic reference covariance and transport it for one or twenty steps under either the true flow Jacobian or the learned Jacobian products. Because the additive noise terms are omitted from this diagnostic, darker colors correspond directly to stronger local contraction of the transported reference covariance. Additive process noise is omitted in this diagnostic, so darker values correspond to stronger contraction of the transported covariance. The noisy-state objective is systematically more contractive than the reference, whereas KAFFEE better matches the reference covariance-propagation pattern.

\begin{figure}[!htbp]
	\centering
	\includegraphics[width=0.98\linewidth]{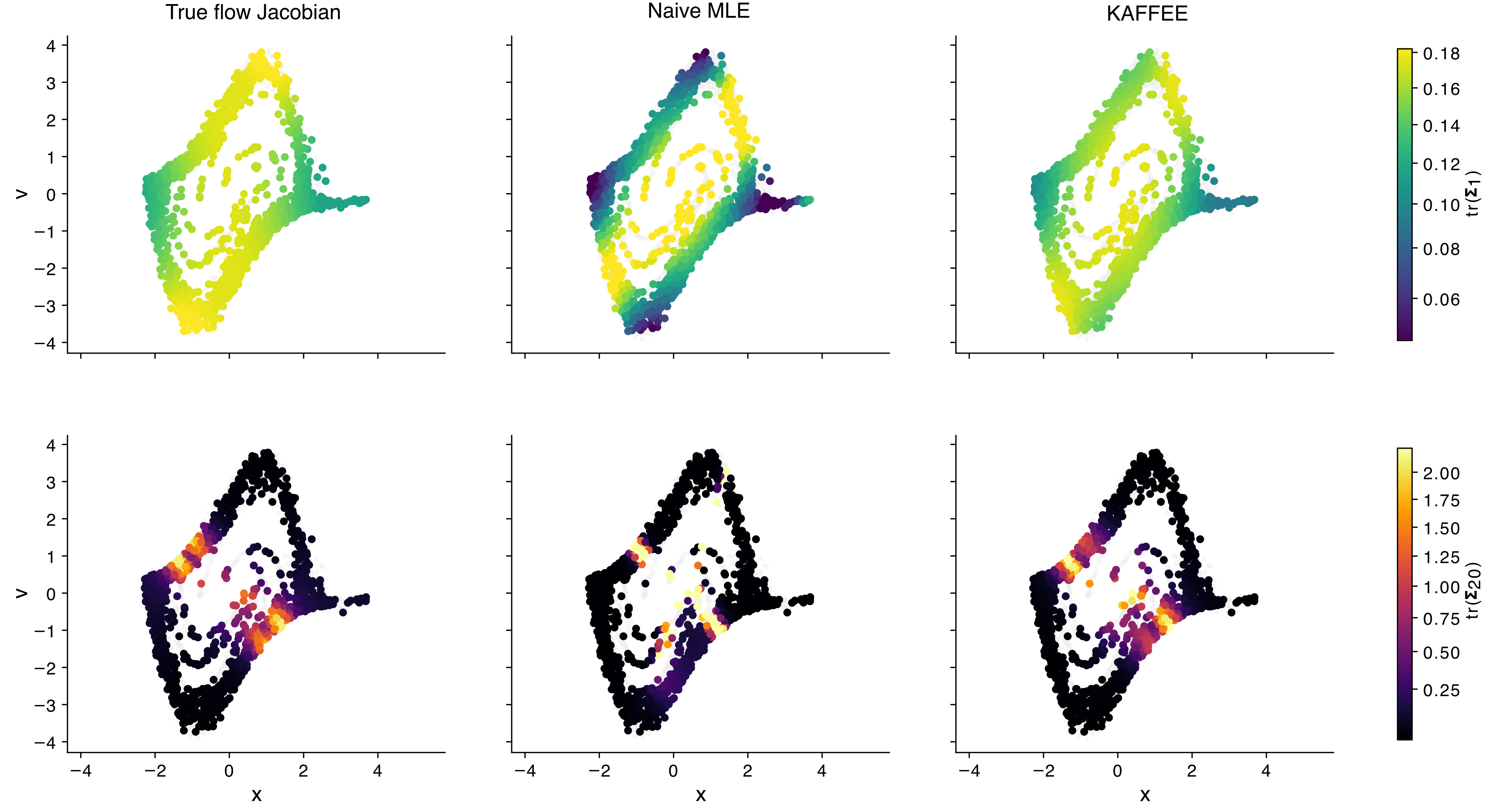}
	\caption{\textbf{Van der Pol covariance propagation map.} \textbf{(top row)} One-step transported reference-covariance trace over the same sampled state cloud: left for the true flow Jacobian, center for the noisy-state predictive objective, right for KAFFEE.
    \textbf{(bottom row)} Analogous twenty-step covariance propagation map with the same ordering of panels. Overall, the noisy-state predictive objective yields damped/contracted uncertainty transport, while KAFFEE recovers a dynamically grounded covariance propagation.}
	\label{fig:nochaos_exp2_var20}
\end{figure}

Together, these controls isolate the simple failure mode known as attenuation bias \citep{fuller1987measurement,staudenmayer2005measurement}, and relate it to the DPC perspective: a finite-horizon predictive objective can improve local fit by contracting the learned transition when noisy observations are used as states. Scoring innovations removes this shortcut in the correctly specified linear-Gaussian case and remains a useful inductive bias in the nonlinear control.

\end{document}